\def\BibTeX{{\rm B\kern-.05em{\sc i\kern-.025em b}\kern-.08em T\kern-.1667em\lower.7ex\hbox{E}\kern-.125emX}}
\DeclareMathOperator*{\argmin}{arg\,min}
\def\cond{\; | \;}
\definecolor{darkred}{RGB}{150,0,0}
\definecolor{darkgreen}{RGB}{0,150,0}
\definecolor{darkblue}{RGB}{0,0,200}
\newtheorem{theorem}{Theorem}
\newtheorem{assumption}{Assumption}
\newtheorem{lemma}{Lemma}
\newtheorem{fact}{Fact}
\newtheorem{corollary}{Corollary}
\newtheorem{remark}{Remark}
\newcommand{\Rwst}{\Rc_{\text{wst}}}
\newcommand{\ellp}{\ell^\prime}
\newcommand{\Lpall}{L^\prime_{0:t,b}}
\newcommand{\Lpallneg}{L^\prime_{0:t,-b}}
\newcommand{\Lpallmaj}{L^\prime_{0:t,1}}
\newcommand{\Lpallmin}{L^\prime_{0:t,-1}}
\newcommand{\Lpt}{L^\prime_{t,b}}
\newcommand{\Lptau}{L^\prime_{\tau,b}}
\newcommand{\Lptneg}{L^\prime_{t,-b}}
\newcommand{\Lpts}{L^\prime_{t}}
\newcommand{\Lptaus}{L^\prime_{\tau}}
\newcommand{\Lpalls}{L^\prime_{0:t}}
\newcommand{\cut}[1]{\textcolor{red}{}}
\newcommand{\nub}{\boldsymbol{\nu}}
\newcommand{\q}{\mathbf{q}}
\newcommand{\x}{\mathbf{x}}
\newcommand{\w}{\mathbf{w}}
\newcommand{\eb}{\mathbf{e}}
\newcommand{\z}{\mathbf{z}}
\newcommand{\Tc}{{\mathcal{T}}}
\newcommand{\Yc}{\mathcal{Y}}
\newcommand{\Rc}{\mathcal{R}}
\newcommand{\Nn}{\mathcal{N}}
\newcommand{\Lc}{\mathcal{L}}
\newcommand{\Gc}{{\mathcal{G}}}
\newcommand{\Ac}{\mathcal{A}}
\newcommand{\Pc}{\mathcal{P}}
\newcommand{\Ec}{\mathcal{E}}
\newcommand{\beq}{\begin{equation}}
\newcommand{\eeq}{\end{equation}}
\newcommand{\bea}{\begin{align}}
\newcommand{\eea}{\end{align}}
\newcommand{\R}{\mathbb{R}}
\newcommand{\Pro}{\mathbb{P}}
\newcommand{\nn}{\notag}
  \newcommand{\mub}{\boldsymbol\mu}
  \newcommand{\eps}{\epsilon}
\DeclarePairedDelimiterX{\inp}[2]{\langle}{\rangle}{#1, #2}
\newcommand{\wt}{\widetilde\w}
\providecommand{\norm}[1]{\lVert#1\rVert}
\def\BibTeX{{\rm B\kern-.05em{\sc i\kern-.025em b}\kern-.08em
    T\kern-.1667em\lower.7ex\hbox{E}\kern-.125emX}}
\DeclareRobustCommand*{\IEEEauthorrefmark}[1]{%
	\raisebox{0pt}[0pt][0pt]{\textsuperscript{\footnotesize #1}}%
}
\begin{document}

\title{
On how to avoid exacerbating spurious correlations when models are overparameterized
}




\author{
	\IEEEauthorblockN{Tina Behnia \IEEEauthorrefmark{1}, \hspace{5pt} Ke Wang \IEEEauthorrefmark{2}, \hspace{5pt} Christos Thrampoulidis\IEEEauthorrefmark{1}\thanks{This work is supported by the an NSERC Discovery Grant, by an NSF Grant  CCF2009030, and by a CRG8 award from KAUST.\newline}}
	}

\maketitle

\footnotetext[1]{Department of Electrical and Computer Engineering, University of British Columbia, Vancouver.} \footnotetext[2]{Department of Statistics and Applied Probability, University of California, Santa Barbara.}
\begin{abstract}
Overparameterized models fail to generalize well in the presence of data imbalance even when combined with traditional techniques for mitigating imbalances.
This paper focuses on imbalanced classification datasets, in which a small subset of the population---a minority---may contain features that correlate spuriously with the class label. For a parametric family of cross-entropy loss modifications and a representative Gaussian mixture model, we derive non-asymptotic generalization bounds on the worst-group error that shed light on the role of different hyper-parameters. Specifically, we prove that, when appropriately tuned, the recently proposed VS-loss learns a model that is fair towards minorities even when spurious features are strong. On the other hand, alternative heuristics, such as the weighted CE and the LA-loss, can fail dramatically.  Compared to previous works, our bounds hold for more general models, they are non-asymptotic, and, they apply even at scenarios of extreme imbalance. 
\end{abstract}

\begin{IEEEkeywords}
fairness, benign overfitting, non-asymptotic 
\end{IEEEkeywords}

\section{Introduction}


The overparameterization trend in machine learning (i.e., training  large models whose size far exceeds that of the training set) has been popularized following repeated empirical observations 
that large models achieve state-of-the-art accuracy despite being able to perfectly fit / interpolate the training data \cite{zhang2021understanding,belkin2019reconciling,nakkiran2019deep}. These surprising empirical findings have precipitated a surge of research activity towards developing a theory of (so-called) \emph{harmless interpolation} or \emph{benign overfitting} that seeks explanations and better understanding of the interplay between large models, (gradient-based) optimization and data, e.g. \cite{liang2018just,mei2019generalization,bartlett2020benign,hastie2019surprises}. Such a theory is still at its infancy. Yet, remarkable progress has contributed to reinforcing the modern wisdom: ``large models train better and training longer improves accuracy.''
On the other hand, 
 the inherent ability of overparameterized models to perfectly fit any training objective hides risks. 
Specifically, this becomes relevant when learning from imbalanced datasets with unequal (sub)population distributions. In the context of classification, a traditional approach to mitigate data imbalances is using a weighted cross-entropy (CE) loss that up-weights the contribution of minority examples to the total loss by a factor that is proportional to the level of imbalance. However, large models' expressive ability to perfectly fit all the data---including minorities---urges us to probe the efficacy of that technique.
 Indeed, it has been shown empirically that weighted CE, and conventional techniques alike, are \emph{not} effective under overparameterization, e.g. \cite{byrd2019effect,sagawa2019distributionally,spurious}.


Thus, researchers have sought alternatives that are tailored to overparameterized models. Among  many heuristic proposals in the literature, one that showed promising performance on benchmark imbalanced datasets and attracted significant attention (perhaps also owing to its simplicity) is the so-called \emph{logit-adjusted} (LA) loss \cite{TengyuMa,Menon}. Compared to weighted CE, the LA-loss modifies the individual \emph{logits} by \emph{additive} hyperparameters that depend on the frequency of the corresponding example. While these additive adjustments were first introduced with the intention of creating larger margins for the minorities \cite{TengyuMa,Menon}, it was recently shown by \cite{VSloss}, that training overparameterized models for a large number of epochs can counter this effect (similar to countering the effect of weights in weighted CE). As a remedy, \cite{VSloss} proposed the so-called \emph{vector-scaling} (VS) loss, which instead of additive, uses \emph{multiplicative} factors to adjust the CE logits. Specifically, the authors showed that the multiplicative logit adjustments, unlike the additive ones, induce an implicit bias that favors convergence to models with larger minority margins. 

In this paper, we complement the results of \cite{VSloss} with an alternative non-asymptotic analysis of their proposed VS-loss that further emphasizes its benefits for learning from data with spurious correlations and (possibly extreme) imbalances. Specifically, inspired by \cite{spurious}, we study a (generative) Gaussian mixture model (GMM) {as shown in Figure  \ref{fig:model}}, in which  features $\x=[\x_c \,;\,\x_s]$ are split into core and spurious ones. The core features $\x_c$ are determined based on the example's label $y=\{-1,+1\}$, while the spurious features $\x_s$ are determined by an additional \emph{attribute} variable $a$, which is correlated---perhaps \emph{spuriously}---with the label. We assume that the attribute values are known during training, but are unknown at test time. When the attribute is spuriously correlated with the correct label, e.g. when $a=-y$, it naturally hinders classification. Whether the attribute agrees with the label ($a=y$) or not ($a=-y$) defines two subpopulation groups and the goal is to learn a classifier with good generalization over instances from both groups. Specifically, we want to minimize \emph{worst-group error}, rather than average error. For such a model, extensive numerical experiments in \cite{spurious} revealed that overparameterized models trained with weighted CE have poor worst-group test error  when (i) the relative sizes of two groups ($a=y$ or $a=-y$) are very different, and, (ii) the signal-to-noise ratio (SNR) of the spurious features $\x_s$ is comparable to that of the core features.  \emph{Is this an inherent limitation of training overparameterized models without explicit regularization, or, are there alternative training methods that can achieve good worst-group accuracy?}

\vspace{1pt}
\noindent\textbf{Contributions.}~
We show that an appropriately tuned VS-loss can achieve good performance despite imbalances and spurious feature correlations. We derive a non-asymptotic bound on the worst-group error of the VS-loss in a linear setting and identify overparameterized regimes in which the bound becomes exponentially small as the model size increases, even under extreme group imbalances and possibly high SNR of spurious features. Interestingly, our bound requires a proper tuning of (both the additive and multiplicative) VS-loss hyper-parameters that is consistent with the heuristic choices made in the literature \cite{Menon,CDT,VSloss,Autobalance}. To further emphasize the critical role of the multiplicative factors, we derive a lower bound on the worst-group error of the LA-loss which is \emph{non}-vanishing when groups are imbalanced and spurious correlations are significant. We further extend our positive results on the VS-loss to a noisy setting in which training labels can be corrupted. This connects our results to the benign overfitting literature, showing that interpolation of noisy data can be harmless (in terms of worst-group error) despite imbalances and spurious correlations. Finally, we show that under sufficient overparameterization the VS-loss with hyperparameters $\Delta_\pm$, learns a classifier that interpolates the data with respect to a label encoding that assigns labels ${\pm 1/\Delta_\pm}$ to respective majority ($+$) / minority ($-$) examples. For simplicity, we use an `exponential' form of the VS-loss in our analysis. Analogous simplifications have been adapted for the CE loss in the literature, e.g. \cite{chatterji,soudry2018implicit,ji2018risk,VSloss}. We believe an extension to the CE loss is mostly based on the same core argument, but we leave this for future work.

\vspace{1pt}
\noindent\textbf{Related work\footnote{The impact of spurious correlations has also been considered when studying domain generalization, that is settings in which training and test environments might differ (with respect to core and/or spurious features) e.g. \cite{arjovsky2019invariant}. Instead, in our setting the test and train samples come from the same environment.}.}~As mentioned, our study is primarily motivated by the works of Kini et al. \cite{VSloss} (in terms of algorithms) and Sagawa et al. \cite{spurious} (in terms of data model).  In terms of analysis, we follow the approach introduced by Chatterji and Long \cite{chatterji} who derived non-asymptotic bounds on the average error of CE minimization for overparameterized binary mixtures and identified regimes of benign overfitting. The key observation in their analysis, which directly tracks the error of gradient-descent (GD) iterates, is that---under sufficient overparameterization---the ratio of derivatives of the loss at any two training examples is at most a \emph{constant} factor. Thus, the influence of any one example on the GD trajectory can be well controlled. We include a detailed comparison to \cite{chatterji} and \cite{VSloss} in Sec. \ref{sec:more-related}. More broadly, our results fit in the emerging literature on overparameterized learning theory. While an exhaustive reference is out of scope, we briefly mention close connections to \cite{dengmodel,montanari2019generalization,taheri2020sharp,liang2020precise,aubin2020generalization,dhifallah2021inherent,Vidya,Ke,wang2021benign,ardeshir2021support,cao2021risk}, which all study benign overfitting in linear classification. In all these works, the crux of the analysis unfolds over two stages: (a) First, leverage optimization results on implicit bias of GD, e.g. convergence to the hard-margin support-vector machine (SVM) solution \cite{soudry2018implicit,ji2018risk}; (b) Second, perform a high-dimensional generalization analysis of the equivalent (at convergence) SVM classifier. One group of works, \cite{dengmodel,montanari2019generalization,taheri2020sharp,liang2020precise,aubin2020generalization,dhifallah2021inherent} accomplishes the second stage by using the convex Gaussian min-max theorem \cite{StoLASSO,TOH15,TAH18}. This results in exact formulas for the generalization error that hold asymptotically in the high-dimensional proportional regime. Another set of works, \cite{Vidya,Ke,wang2021benign,ardeshir2021support,cao2021risk} derive non-asymptotic bounds on the error of SVM, by relating it to the simpler (owing to its closed form expression) pseudo-inverse solution. This is made possible because they first show that, under sufficient overparameterization, all data points are support vectors. Here, we follow \cite{chatterji}. The approach does not fall into either of the above paradigms. Specifically, it does not rely on first establishing an optimization result on implicit bias of GD. Interestingly, our aforementioned finding that VS-loss interpolates the data with appropriate labeling $\pm1/\Delta_{\pm}$ resembles the findings in \cite{Vidya,Ke,wang2021benign,ardeshir2021support,cao2021risk} and creates a missing link between the different techniques in \cite{chatterji} versus \cite{Vidya,Ke,wang2021benign,ardeshir2021support,cao2021risk}.



\begin{figure}[!t]
	\centering
	\includegraphics[width=3in]{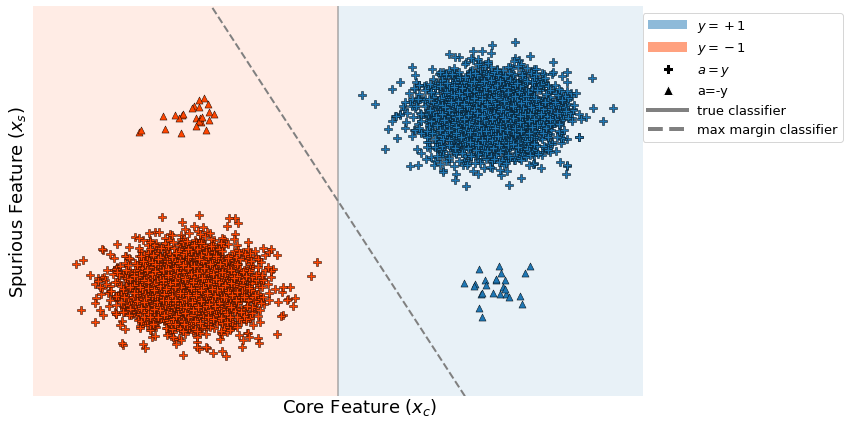}
	\caption{Illustration of a binary classification task in which features are composed of a `core' and a `spurious' part. The core part is determined by the label $y$, while the spurious part is determined by an attribute $a$ that is correlated to $y$. For a few minority samples (marked with $\triangle$), the attribute is spuriously correlated with the label (here, $a=-y$). This shifts the max-margin classifier away from the true decision boundary and towards minorities, hindering correct classification of the latter.}
	\label{fig:model}
\end{figure}


\section{Setup and background}


Consider supervised classification of data $(\x,a,y)$ sampled i.i.d from a distribution $\Pc=\Pc_{\text{feature}}\times\Pc_{\text{attribute}}\times\Pc_{\text{label}}$. Here, $\x\in\R^d$ denotes the feature vector and $y$ represents the label. For simplicity, we focus on binary classification, i.e.  $y\in\Yc=\{-1,+1\}$. Each datapoint is also associated with an attribute $a\in\Ac{=\{-1,+1\}}$ that describes a subset of the features and is correlated with the label. Importantly, this correlation can be spurious. As an example, consider the task of classifying images of birds into labels $\{$`waterbird',`landbird'$\}$ \cite{sagawa2019distributionally}. If an image's background is $a=$`land background', while the object's label is $y=$`waterbird', then the strong association of the attribute with the wrong label $-y=$`landbird' can hinder the classification tasks. During training, we wish to learn a classifier that avoids overly relying on such spurious features, while at the same time results in a classification rule that performs well on all test samples irrespective of their attributes (spurious or not).  To formalize this, note that each 
%
%
%
datapoint belongs to a group $g\in\Gc=\Yc\times \Ac$. 
Thus, we seek a classifier $f_\w:\R^d\rightarrow\R$, parameterized by $\w$,  that achieves low \emph{worst-group error}:
\begin{align}\label{eq:worst}
\Rwst:=\max_{g\in\Gc} \Pro_{(\x,a,y)\sim\Pc}\left\{ f_\w(\x) y < 0 \,|\,(y,a)=g\right\}.
\end{align}
Note that in settings with \emph{group-imbalances}, i.e. when certain population groups $g=(y,a)$ are underrepresented, the {worst-group error} $\Rc_{\text wst}$, can be very different compared to the average error $\Rc_{\text{avg}}:=\Pro_{(\x,a,y)\sim\Pc}\left\{ f_\w(\x) y  < 0\right\}$.  
Also, note that $f_\w(\x)$ decides on the label of the test sample based only on the feature vector $\x$. That is, the attribute $a$ is \emph{hidden} at test time. On the other hand, we assume access to attributes during training. That is, to learn $\w$, we assume access to a train set $\Tc:=\left\{(\x_i,a_i,y_i)\right\}_{i\in[n]}$ of $n$ i.i.d samples from $\Pc$. 

For a loss function $\ell(f_\w(\x),g):\R\times\Gc\rightarrow\R$, we seek classifier $\hat f(\x):=f_{\hat\w}(\x)$  that minimizes the (un-regularized) empirical risk 
$
\Lc_n(\w) := \frac{1}{n}\sum_{i\in[n]} \ell(y_if_\w(\x_i),g_i).
$
For the minimization, we run gradient descent (GD) with constant step-size $\eta>0$ until convergence, resulting in iterations 
$
\w^{t+1} = \w^{t} -\eta \,\nabla\Lc_n(\w^t), t=0,1,\ldots.
$
 The output classifier is
$
\hat f(\x) := f_{\bar{\w}^\infty}(\x)
$ for $\bar{\w}^{\infty}:=\lim_{t\rightarrow\infty} \w^t / \|\w^t\|_2$ the classifier direction where GD converges to; (in our setting, the limit always exists.)

In this paper, we study the VS-loss proposed by \cite{VSloss} as an alternative to weighted CE tailored to overparameterized settings. The VS-loss defines a parametric family of CE losses with (hyper)parameters that not only weigh the loss itself, but also they adjust the logits. Specifically, the binary VS-loss is $$\ell_{\text{VS}}(yf_\w(\x),g)=\omega_g\log(1+e^{-\Delta_{g}y\,f_\w(\x)+\iota_g}),$$ for weigths $\omega_g>0$, and multiplicative and additive logit adjustments $\Delta_g>0$ and $\iota_g$. Note that the hyperparameters are parameterized by the group $g$. Specific choices of $(\omega_g,\iota_g,\Delta_g)$ recover the CE-loss, the weighted CE loss and the LA-loss. For example, the LA-loss of \cite{TengyuMa,Menon} corresponds to setting $\Delta_g=1$, i.e. only using additive logit adjustments. 

\subsection{A Gaussian mixture model}\label{sec:GMM}

Our analysis applies to 
a Gaussian mixture model that defines the distribution $\Pc$ of the data $(\x,a,y)$ as follows. The label $y$ takes values $\pm1$ with probabilities $\pi_{\pm1}$. The attribute $a$ also takes values $\pm1$ and is correlated to $y$. Specifically, $P\{a|y
\}=p_{a,y}$ for $(y,a)=(\pm1,\pm1).$
Finally, the feature distribution conditional on the label and attribute choice (aka, the group) is an isotropic Gaussian with mean $[y\mub_{c} \,;\, a\mub_s]\in\R^d$, i.e. 
\begin{align}\label{eq:gmm_def}
\x\,|\,(y,a) = 
\begin{bmatrix}\x_c \\ \x_s \end{bmatrix}\,|\,(y,a) \sim \Nn\Big(\begin{bmatrix}y\mub_{c} \\ a\mub_s\end{bmatrix},\mathbf{I}_d\Big).
\end{align}
Our results emphasize the specific impact of the two following model characteristics on the worst-group error.

\vspace{1pt}
\noindent\textbf{(i) Imbalance ratio.} We focus on a scenario where samples with spurious correlations, i.e. $(y,a)=(+1,-1),(-1,+1)$,  are minorities. Thus, it is more convenient to define group membership of a sample $(\x,a,y)$ according to the value $b:=y\cdot a$. This splits the data in two (instead of four) groups identified by $b=+1$ for $a=y$, and, $b=-1$ for $a=-y.$ For the training data, we let $\Tc_{b}=\{i\in[n]\,:\,y_ia_i=b\}, b=\pm1$ denote the two corresponding subsets. Letting $n_b=|\Tc_b|$ denote the size of each group within the training set, we use $\tau:=n_{1}/n_{-1}$ to denote the imbalance ratio. For a setting where the group of spurious samples ($b=-1$) is minority, we have $\tau\gg 1$.

\vspace{1pt}
\noindent\textbf{(ii) Spurious-features SNR.} The SNR of  spurious (resp. core) features is determined by the energy  $\|\mub_s\|_2^2$ (resp. $\|\mub_c\|_2^2$) of the spurious (resp. core) mean vector. We denote the total SNR as $R_+=\|\mub_c\|_2^2+\|\mub_s\|_2^2$. We also define $R_-=\|\mub_c\|_2^2-\|\mub_s\|_2^2$ and use the ratio $R_-/R_+\in[-1,1]$ to capture how strong are the core features compared to the spurious ones. Note for example that the ratio is zero when the SNR of the spurious features is same as the core features.   

\vspace{1pt}
\noindent\textbf{Extensions.} Equation \eqref{eq:gmm_def} assumes disjoint support of spurious / core features, which can be easily relaxed to $\x\sim\Nn(y\bar\mub_c+a\bar\mub_s,\mathbf{I}_{d})$: \eqref{eq:gmm_def} is a special case for $\inp{\bar\mub_c}{\bar\mub_s}=0$; we focus on this since it simplifies the exposition.
It is also relatively straightforward to extend our results to sub-Gaussian mixtures (i.e. the feature noise in \eqref{eq:gmm_def} be sub-Gaussian rather than Gaussian). Since the extension does not offer significant additional insights, we focus on the Gaussian case.

\section{Results}

\subsection{Worst-group error of the VS-loss}
We show that appropriately tuned VS-loss can learn from GMM data despite imbalances and spurious correlations. 


\begin{assumption}[Data model]\label{ass:means} Data (both training and test) are generated according to the GMM specified in Section \ref{sec:GMM}. 
\end{assumption}



Before stating our bound, we specify the regime of problem parameters $n,d,\mub_c,\mub_s$ and of a failure probability $\delta$ under which the results are valid.

%
%
%
%
%
%
%
%
%

\begin{assumption}[Problem regime]\label{ass:dim}
 We say Assmimption \ref{ass:dim} holds with constant $C_0$ if for \emph{all} large constants $C>C_0$:
 \begin{enumerate}[label=(\Alph*)]
 	\item $n\geq C\log(1/\delta),$
 	\item $\|\mub_{c}\|_2^2\geq C\log(n/\delta),$
 	\item $d\geq CR_+n,$
 	\item $d\geq Cn^2\log(n/\delta).$
 \end{enumerate} 
%

%
\end{assumption}

The first two conditions are rather mild. The first requires increasing sample size for higher success probability $1-\delta$ and the second that the SNR of core features grows logarithmically with $n/\delta$. The last two conditions require sufficient overparameterization. The third one, essentially asking for $d\geq C n\log(n)$, is only slightly more restrictive than the minimal overparameterization requirement of $d\geq Cn$. On the other hand, the fourth condition is more restrictive, and while we believe that it can be relaxed, it is currently needed in our proof.

For simplicity of the analysis, we assume an `exponential' analogue of the VS-loss and work with linear classifiers. For a discussion on CE loss, see Section \ref{sec:extension}.
Thus, our classifier $f_\w(\x)=\inp{\w}{\x}$ results by minimizing the empirical loss
$$
\Lc_n(\w) := 
\sum_{i\in[n]} \ell_{i,t}:=
\sum_{i\in[n]} \omega_{b_i}e^{-\Delta_{b_i}y_i\,\inp{\w}{\x_i}+\iota_{b_i}},
$$
with GD. Specifically, for an initialization $\w^0$ and step size $\eta$; we have
\begin{align}\label{eq:GD}
\w^{t+1} = \w^t + \eta\sum\nolimits_{i\in[n]}\ellp_{i,t}\, y_i\x_i\,.
\end{align}
Here, we have denoted
the \emph{negative} of the loss derivative corresponding to $(\x_i,a_i,y_i)$ at time $t$ as 
$$\ellp_{i,t}=\ell^\prime(y_i\inp{\x_i}{\w};b_i)=\Delta_{b_i}\ell_{i,t}.$$
We are interested in the \emph{worst-group error} of GD iterates as $t\rightarrow\infty$,
$$
\Rwst^\infty:=\max_{b\in\{\pm1\}} \Pro_{(\x,a,y)\sim\Pc}\left\{ y\inp{\bar{\w}^\infty}{\x} < 0 \,|\,b=y\cdot a\right\}.
$$
  In the theorem, $Q(\cdot)$ denotes the Q-function.



\begin{theorem}[VS-loss]\label{thm:main}
Assume data generated as in Assumption \ref{ass:means} with $R_-\geq0$, i.e. $\|\mub_c\|_2\geq\|\mub_s\|_2.$ Consider the worst-group error $\Rc_{wst}^\infty$ of gradient descent in \eqref{eq:GD} with initialization $\w^0=0.$ Then, there exists absolute constants $c_1>0$ and  $c=c(c_1)>0$ and $C_0:=C_0(c_1)>0$ such that the following statement holds for all problem parameters and success probability $1-\delta$ satisfying Assumption \ref{ass:dim} with constant $C_0$.
With probability at least $1-\delta$ over the realization of the training set $\Tc$,  the worst-group error is bounded by
\begin{align}\label{eq:main_bound}
\Rc_{wst, \rm{VS}}^\infty\leq Q\big( c {R_+}\big/{\sqrt{d}}\big),
\end{align}
provided  the step-size is sufficiently small,
\begin{align}\label{eq:Delta}
\frac{1}{2}\, \frac{n_{\pm}}{n} \leq \Delta_{\pm} \leq 2 \frac{n_{\pm}}{n}\quad\text{ and }\quad \frac{\omega_+ e^{\iota_+}}{\omega_- e^{\iota_-}} = \frac{\Delta_{-}^2}{\Delta_+^2}\,.
\end{align}
E.g., $\omega_{\pm}=1$ and $\iota_\pm = -2\log(\Delta_{\pm})$ is consistent with this.  

\end{theorem}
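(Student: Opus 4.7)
The plan is to follow the direct gradient-descent tracking approach of Chatterji--Long \cite{chatterji} rather than route through an implicit-bias characterization. Writing $y_i\x_i=\mub_c+b_i\mub_s+y_i\z_i$ with $\z_i\sim\Nn(\mathbf{0},\Id)$ and unrolling \eqref{eq:GD} from $\w^0=\mathbf{0}$ yields
\begin{align*}
\w^{t+1}=\eta(\Lpallmaj+\Lpallmin)\mub_c+\eta(\Lpallmaj-\Lpallmin)\mub_s+\eta\sum_{i\in[n]}y_i\,\Lpalls\,\z_i,
\end{align*}
where $\Lpalls:=\sum_{s\le t}\ellp_{i,s}$. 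Since $\inp{\mub_c}{\mub_s}=0$, projecting onto $\mub_c\pm\mub_s$ gives, modulo noise,
\begin{align*}
\inp{\w^{t+1}}{\mub_c+b\mub_s}\approx \eta\big(R_+\Lpall+R_-\Lpallneg\big),\qquad b\in\{\pm1\}.
\end{align*}
Because $R_-\ge 0$, both group margins are lower bounded by $\eta\|\mub_c\|_2^2(\Lpallmaj+\Lpallmin)$ once $\Lpallmaj\asymp\Lpallmin$. A one-time concentration step under Assumption~\ref{ass:dim} supplies the standard noise bounds $\|\z_i\|_2^2\in[d/2,3d/2]$, $|\inp{\z_i}{\z_j}|\lesssim\sqrt{d\log(n/\delta)}$ for $i\ne j$, and $|\inp{\z_i}{\mub_c}|,|\inp{\z_i}{\mub_s}|\lesssim\sqrt{R_+\log(n/\delta)}$; these get absorbed in the correction terms throughout.

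\textbf{Main inductive invariant.} The technical heart is a high-probability invariant, proved by induction on $t$: for a small absolute constant $c_1\in(0,1)$,
\begin{align*}
\frac{\ellp_{i,t}}{\ellp_{j,t}}\in[c_1,c_1^{-1}]\ \text{whenever }b_i=b_j,\qquad\frac{\Lpallmaj}{\Lpallmin}\in[c_1,c_1^{-1}].
\end{align*}
At $t=0$ the intra-group ratios are identically $1$ (since $u_i^0:=y_i\inp{\w^0}{\x_i}=0$), and the cross-group aggregate ratio equals $(n_+/n_-)(\Delta_+/\Delta_-)\cdot(\omega_+e^{\iota_+})/(\omega_-e^{\iota_-})=1$ precisely because \eqref{eq:Delta} enforces $\Delta_\pm\asymp n_\pm/n$ together with $\omega_+e^{\iota_+}/(\omega_-e^{\iota_-})=(\Delta_-/\Delta_+)^2$. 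For the inductive step I would track
\begin{align*}
u_i^{t+1}-u_i^t=\eta\,\ellp_{i,t}\|\x_i\|_2^2+\eta\sum_{j\ne i}\ellp_{j,t}\,y_iy_j\inp{\x_j}{\x_i}.
\end{align*}
Conditions~(C)--(D) make the diagonal term $\Theta(\eta\ellp_{i,t}d)$ dominate both the signal off-diagonal ($\lesssim\eta R_+\sum_j\ellp_{j,t}$) and the noise off-diagonal ($\lesssim\eta\sqrt{d\log(n/\delta)}\sum_j\ellp_{j,t}$) as long as the invariant currently holds, so $u_i^{t+1}\approx u_i^t+\eta\ellp_{i,t}d$. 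The exponential form $\ellp_{i,t}=\Delta_{b_i}\omega_{b_i}e^{-\Delta_{b_i}u_i^t+\iota_{b_i}}$ then makes the intra-group gap self-correcting (larger $u$ exponentially suppresses further growth). The cross-group aggregate ratio is preserved by the same monotone mechanism combined with the initial balance; sufficiently small step-size closes the induction.

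\textbf{Concluding the bound.} Granted the invariant, $\Lpallmaj\asymp\Lpallmin\to\infty$, so the signal part of $\inp{\w^t}{\mub_c+b\mub_s}$ is $\gtrsim \eta\|\mub_c\|_2^2(\Lpallmaj+\Lpallmin)$ uniformly in $b$, while the noise correction $\lesssim\eta\sqrt{R_+\log(n/\delta)}\sum_i\Lpalls$ is lower order by Assumption~(B). Expanding $\|\w^t\|_2^2$ and controlling off-diagonal $\inp{\x_i}{\x_j}$ via the noise concentration bounds yields $\|\w^t\|_2^2\lesssim \eta^2 d\big((\Lpallmaj)^2/n_+ + (\Lpallmin)^2/n_-\big)$. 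Taking $t\to\infty$ and dividing,
\begin{align*}
\frac{\inp{\bar\w^\infty}{\mub_c+b\mub_s}}{\|\bar\w^\infty\|_2}\gtrsim \frac{\|\mub_c\|_2^2(\Lpallmaj+\Lpallmin)}{\sqrt{d\big((\Lpallmaj)^2/n_+ + (\Lpallmin)^2/n_-\big)}}\gtrsim \frac{R_+}{\sqrt{d}},
\end{align*}
where the last step uses $n_+n_-\ge n-1$, $\Lpallmaj\asymp\Lpallmin$, and $\|\mub_c\|_2^2\ge R_+/2$ from $R_-\ge 0$. Since $\inp{\bar\w^\infty}{y\z}\sim\Nn(0,1)$ at test time, each group's error equals $Q$ of the corresponding ratio, giving \eqref{eq:main_bound}.

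\textbf{Main obstacle.} The hardest step is preserving the cross-group aggregate invariant $\Lpallmaj\asymp\Lpallmin$ throughout the trajectory, not only at $t=0$. The intra-group invariant enjoys a clean self-correcting structure via the common $\Delta_{b_i}$ in the exponent, but the cross-group aggregate couples two groups evolving under different multiplicative exponents $\Delta_+\ne\Delta_-$, and maintaining their balance through all iterations demands a careful coupling of the growth rates. As in \cite{chatterji}, propagating the small off-diagonal error terms across many steps is what forces the comparatively strong overparameterization condition~(D) in Assumption~\ref{ass:dim}.
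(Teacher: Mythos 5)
Your overall route is the same as the paper's: direct tracking of the GD iterates in the style of Chatterji--Long, a one-time concentration event, a lower bound on $\inp{\w^t}{\nub_b}$ of the form $\eta R_+\big(\tfrac12\Lpall+(\tfrac{R_-}{R_+}-\tfrac{c_1}{C})\Lpallneg\big)$, an upper bound $\|\w^t\|_2\lesssim\eta\sqrt{d}\,\Lpalls$, and a conclusion via the ratio. You also correctly identify why the tuning \eqref{eq:Delta} matters at $t=0$: it equalizes the (scaled) initial derivatives across groups. However, the step you flag as the ``main obstacle'' --- preserving the cross-group balance $\Lpallmaj\asymp\Lpallmin$ along the whole trajectory --- is precisely the paper's central technical contribution (Lemma \ref{lem:keymain} / Lemma \ref{lem:key}), and your proposal does not actually close it: ``the same monotone mechanism combined with the initial balance'' is not a proof, and the self-correcting argument you give for the intra-group ratios does not transfer verbatim across groups because the two groups decay under different exponents $\Delta_+\neq\Delta_-$.

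The way the paper resolves this is worth internalizing, because it is a small but decisive change of variable: instead of inducting separately on intra-group pointwise ratios and a cross-group \emph{aggregate} ratio, induct on the pointwise ratio $A_t=\ellp_{i,t}/\ellp_{j,t}$ for an \emph{arbitrary} pair $(i,j)$, cross-group included, and show $A_t\leq 4c_1^2\,\Delta_{b_j}/\Delta_{b_i}$. The recursion obtained from the GD update gives
\begin{align*}
\log\frac{A_{t+1}}{A_t}\;\leq\;-\frac{\eta d}{c_1}\,\Delta_{b_j}\ellp_{j,t}\Big(\frac{\Delta_{b_i}\ellp_{i,t}}{\Delta_{b_j}\ellp_{j,t}}-2c_1^2\Big),
\end{align*}
after absorbing the off-diagonal terms using Assumption \ref{ass:dim}(C--D). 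The quantity that is self-correcting is therefore the $\Delta$-\emph{scaled} ratio $\Delta_{b_i}\ellp_{i,t}/(\Delta_{b_j}\ellp_{j,t})$: whichever of $\Delta_{b_i}\ellp_{i,t}$ is larger decays faster, uniformly over group membership. The tuning of $\omega,\iota$ in \eqref{eq:Delta} seeds the induction ($A_0\leq\Delta_{b_j}/\Delta_{b_i}$), and the tuning $\Delta_\pm\asymp n_\pm/n$ then converts the pointwise conclusion into exactly the aggregate balance $\Lptmin\geq c_0\Lptmaj$ you need (sum the bound over the minority group and compare to $n_+\max_j\ellp_{j,t}$). Your sketch has all the ingredients for this --- you even write down the correct dominant drift $u_i^{t+1}\approx u_i^t+\eta\ellp_{i,t}d$ --- but without identifying the scaled ratio as the inductive quantity, the cross-group invariant remains an assumption rather than a consequence, so the proof as proposed is incomplete at its most important point. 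The remaining pieces (the correlation and norm bounds, the handling of the $R_-$ and noise terms via Assumption (B), and the final $Q(cR_+/\sqrt{d})$ computation) are correct and match the paper up to your slightly more refined, but unnecessary, denominator estimate.
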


The bound in \eqref{eq:main_bound} guarantees a vanishing VS-loss error provided  $R_+=\omega(d^{1/2})$. On the other hand, Assumption \ref{ass:dim} needs $R_+=O(d/n).$ For a concrete setting, consider $n=O(d^\eps)$ for  (arbitrarily) small $\eps\geq 0$. Theroem \ref{thm:main} shows that the VS-loss error is vanishing provided that $R_+=\Theta(d^\beta)$ for $\beta\in(1/2,1-\eps].$ Importantly, this conclusion holds for \emph{any} value of imbalance ratio $\tau$. Also, spurious correlations can be tolerated provided that their SNR is controlled by that of the core features. This condition is captured by the assumption   $R_-\geq 0$. 


\subsection{Proof sketch}

In this section, we provide a proof sketch of Theorem \ref{thm:main}. Detailed proofs are deferred to the Appendix.
For each $i\in[n]$, denote 
\begin{align*}
	\z_i:=y_i\x_i=\nub_{b_i}+\q_i,\quad\q_i\sim\Nn(0,I_d).
\end{align*}
 Here, recall that $b_i=y_i\cdot a_i$ specifies the group of each sample, and we also let $\nub_{\pm}:=[\mub_c\,;\, \pm\mub_{s}] \in\R^d$ represent the group mean of $\z_i$. 
With this notation and using Assumption \ref{ass:means}, it is easy to show that $$\Rwst^t=Q\big(\min_{b\in\{\pm 1\}}\inp{\nicefrac{\w^t}{\|\w^t\|_2}}{{\nub_{b}}}\big).$$
Hence, our goal is to lower bound the correlation between the GD iterate $\w^t$ and the effective mean vectors $\nub_\pm$. To do this we first need some high probability bounds on the structure of data points, provided by Lemma \ref{lem:HP}.

%
%
%

\begin{lemma}\label{lem:HP}
There exists constant $c_1\geq 1$ and $C_0:=C_0(c_1)>0$ such that for all problem parameters satisfying Assumption \ref{ass:dim}(A-C), and for $b\in\{\pm1\}$, the following statements hold simultaneously with probability $1-\delta$,
\begin{align}
&\forall i\in[n]: d/c_1\leq \|\z_i\|_2^2 \leq c_1 d\\
&{\forall i\in\Tc_{\pm b}\,:\, |\z_i^T\nub_{b}- R_{\pm}| \leq c_1\sqrt{R_+}\sqrt{\log(n/\delta)}
}
\\
&\forall \Tc_{b}\ni i\neq j\in\Tc_{\pm b}\,:|\z_i^T\z_j - {R_\pm}| \leq c_1\sqrt{d}\sqrt{\log(n/\delta)}.
\end{align}
\end{lemma}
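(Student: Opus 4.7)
\textbf{Proof plan for Lemma \ref{lem:HP}.} The approach is the standard one for Gaussian mixture concentration: decompose each quantity into a deterministic part (depending only on the mean vectors $\nub_{\pm}$) plus Gaussian fluctuations, then control the fluctuations by chi-squared/Gaussian tail inequalities, and finally union-bound over the $O(n^2)$ relevant events. The key algebraic facts are that $\nub_b^\top\nub_b=\|\mub_c\|_2^2+\|\mub_s\|_2^2=R_+$ and $\nub_b^\top\nub_{-b}=\|\mub_c\|_2^2-\|\mub_s\|_2^2=R_-$, so the deterministic parts evaluate to exactly the $R_\pm$ that appear in the lemma.

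First, for $\|\z_i\|_2^2=\|\nub_{b_i}\|_2^2+2\nub_{b_i}^\top\q_i+\|\q_i\|_2^2=R_++2\nub_{b_i}^\top\q_i+\|\q_i\|_2^2$, I would apply the standard chi-squared deviation bound to $\|\q_i\|_2^2$ to obtain $|\|\q_i\|_2^2-d|\lesssim\sqrt{d\log(n/\delta)}$ and a scalar Gaussian tail bound to $\nub_{b_i}^\top\q_i\sim\Nn(0,R_+)$ to obtain $|\nub_{b_i}^\top\q_i|\lesssim\sqrt{R_+\log(n/\delta)}$, both with probability $1-\delta/n$ per $i$. Combining these with Assumption \ref{ass:dim}(C) (which gives $R_+\leq d/(Cn)\leq d$ and also $\sqrt{d\log(n/\delta)}\ll d$ for $C$ large enough via Assumption \ref{ass:dim}(A)) and taking a union bound over $i\in[n]$ yields $\|\z_i\|_2^2=d(1+o(1))$ uniformly, which gives the first claim for some absolute $c_1\geq 1$.

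Second, for $\z_i^\top\nub_b=\nub_{b_i}^\top\nub_b+\q_i^\top\nub_b$, the deterministic piece equals $R_+$ or $R_-$ depending on whether $b_i=\pm b$, and the fluctuation $\q_i^\top\nub_b\sim\Nn(0,\|\nub_b\|_2^2)=\Nn(0,R_+)$ gives $|\q_i^\top\nub_b|\leq c_1\sqrt{R_+\log(n/\delta)}$ with probability $1-\delta/(2n)$ by a scalar Gaussian tail bound; union bound over $i\in[n]$ and $b\in\{\pm1\}$. Third, for distinct $i\neq j$,
\begin{align*}
\z_i^\top\z_j \;=\; \nub_{b_i}^\top\nub_{b_j} + \nub_{b_i}^\top\q_j + \q_i^\top\nub_{b_j} + \q_i^\top\q_j,
\end{align*}
where the first term is again $R_\pm$ according to $b_ib_j=\pm 1$, the two middle Gaussian inner products are each bounded by $c\sqrt{R_+\log(n/\delta)}$ exactly as above, and the remaining term $\q_i^\top\q_j$ is controlled by conditioning on $\q_j$, under which it is $\Nn(0,\|\q_j\|_2^2)$; on the good event from the first claim $\|\q_j\|_2^2\leq c_1 d$, so a Gaussian tail bound yields $|\q_i^\top\q_j|\leq c\sqrt{d\log(n/\delta)}$ with probability $1-\delta/n^2$ per pair. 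Union-bounding over the $\binom{n}{2}$ pairs and using $R_+\leq d$ to absorb the $\sqrt{R_+\log(n/\delta)}$ terms into $\sqrt{d\log(n/\delta)}$ gives the third claim.

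The only genuinely non-routine part of the argument is making the probability budget close cleanly: we need simultaneous control of $n$ norm-type events, $2n$ mean-correlation events, and $\binom{n}{2}$ pairwise-correlation events, all at total probability $1-\delta$. This is handled by allocating $\delta/3$ to each family and paying $\log(n/\delta)$ factors in the deviations, which is exactly the form that appears in the statement. The regime conditions in Assumption \ref{ass:dim}(A,C) are then what ensure these logarithmic fluctuations are strictly dominated by $d$ (for the first claim) and can be cleanly packaged into the stated inequalities with a single absolute constant $c_1$.
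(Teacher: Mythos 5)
Your proposal is correct and follows essentially the same route as the paper's proof of this lemma (Lemma \ref{lem:good} in the appendix): decompose each quantity into its deterministic mean part $R_\pm$ plus Gaussian fluctuations, control $\q_i^\top\q_j$ by conditioning on $\|\q_j\|_2$, and close the union bound by splitting $\delta$ across the three families of events. The only cosmetic difference is that the paper bounds $\|\z_i\|_2$ via the concentration of $\|\q_i\|_2$ around $\sqrt{d}$ plus the triangle inequality with $\|\nub_{b_i}\|_2=\sqrt{R_+}\leq\sqrt{d/C}$, rather than expanding $\|\z_i\|_2^2$ and using a chi-squared bound; both yield the same conclusion under Assumption \ref{ass:dim}.
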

The event of this lemma ensures that the data is linearly separable (see Lemma \ref{lem:maxmarg}). 
Conditioned on this event, using the GD update rule and unfolding over iterations, we can show that for all $t\geq 1$:
\begin{align}\label{eq:bound1}
\inp{\frac{\w^t}{\|\w^t\|_2}}{{\nub_{b}}} \geq \frac{R_+}{c_1\sqrt{d}} \cdot \frac{\frac{1}{2} \Lpall + 
\Big(\frac{R_-}{R_+} -  \frac{c_1}{C}\Big)\Lpallneg}{\Lpallmaj+\Lpallmin},
\end{align}
where we set $\Lpall:=\sum_{\tau=0}^t\sum_{i:b_i=b}\ellp_{i,\tau}, b\in\{\pm1\}.$ Note that \eqref{eq:bound1} would directly translate to the bound in \eqref{eq:main_bound} if we showed the second fraction in the RHS of \eqref{eq:bound1} is lower bounded by a constant. To see the challenge, consider minorities $b=-1$ and assume $R_-=0$ (difficult case): we need to show  $(1/2)\Lpallmin-(c_1/C)\Lpallmaj\geq c_0(\Lpallmaj+\Lpallmin)$ for constant $c_0>0$. Intuitively, the challenge is that $\Lpallmin$ is composed of only $n_-$ positive terms, which we would need them to outweigh the $n_+\gg n_-$ positive terms in $\Lpallmaj$. We are able to show this thanks to the following key technical lemma, whose proof is deferred to the Appendix.

\begin{lemma}\label{lem:keymain}
In the event of Lemma \ref{lem:HP}, for all small step sizes, $\forall t\geq 0$ and for all $i,j\in[n]$: If the hyperparameters are all tuned as in the statement of Theorem \ref{thm:main} and Assumption \ref{ass:dim}(D) holds, then 
\begin{align} \label{eq:keymain}
	\frac{\ellp_{i,t}}{\ellp_{j,t}}\leq 4c_1^2 \frac{\Delta_{b_j}}{\Delta_{b_i}} \leq 16c_1^2\frac{n_{b_j}}{n_{b_i}}.
\end{align}
Therefore, there exists small enough constant $c_0>0$ such that $\Lpallmin \geq c_0\Lpallmaj.$
\end{lemma}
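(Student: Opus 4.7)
The plan is to recast the ratio claim as an invariance for the GD iterates and prove it by induction on $t$.

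\emph{Reformulation.} Using the closed form $\ellp_{i,t} = \Delta_{b_i}\omega_{b_i}\exp(-\Delta_{b_i}\inp{\w^t}{\z_i} + \iota_{b_i})$ together with the hyperparameter tuning $\omega_+ e^{\iota_+}/(\omega_- e^{\iota_-}) = \Delta_-^2/\Delta_+^2$, a direct computation yields $\omega_{b_i} e^{\iota_{b_i}}/(\omega_{b_j} e^{\iota_{b_j}}) = (\Delta_{b_j}/\Delta_{b_i})^2$ for every pair $(i,j)\in[n]^2$. Consequently,
$$
\frac{\ellp_{i,t}}{\ellp_{j,t}} = \frac{\Delta_{b_j}}{\Delta_{b_i}}\exp(\alpha_{ij}^t), \qquad \alpha_{ij}^t := \Delta_{b_j}\inp{\w^t}{\z_j} - \Delta_{b_i}\inp{\w^t}{\z_i},
$$
so the first inequality in \eqref{eq:keymain} is equivalent to $\alpha_{ij}^t \leq \log(4c_1^2)$. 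Since the claim is symmetric in $(i,j)$, it suffices to prove $|\alpha_{ij}^t| \leq \log(4c_1^2)$ for every $i,j,t$. The second inequality in \eqref{eq:keymain} then follows from $\Delta_\pm \in [n_\pm/(2n), 2n_\pm/n]$.

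\emph{Inductive step.} The base case is immediate since $\w^0 = 0$ gives $\alpha_{ij}^0 = 0$. For the inductive step, the GD recursion produces
$$
\alpha_{ij}^{t+1} - \alpha_{ij}^t = \eta \sum_k \ellp_{k,t}\bigl(\Delta_{b_j}\inp{\z_k}{\z_j} - \Delta_{b_i}\inp{\z_k}{\z_i}\bigr),
$$
which I split into \emph{self} ($k \in \{i,j\}$) and \emph{cross} ($k \notin \{i,j\}$) contributions. At the putative boundary $\alpha_{ij}^t = \log(4c_1^2)$, the reformulation gives $\Delta_{b_j}\ellp_{j,t} = \Delta_{b_i}\ellp_{i,t}/(4c_1^2)$, which combined with $\|\z_i\|_2^2, \|\z_j\|_2^2 \in [d/c_1,\,c_1 d]$ from Lemma \ref{lem:HP} produces a dominant self contribution
$$
\Delta_{b_j}\ellp_{j,t}\|\z_j\|_2^2 - \Delta_{b_i}\ellp_{i,t}\|\z_i\|_2^2 \leq -\tfrac{3d}{4c_1}\,\Delta_{b_i}\ellp_{i,t}.
$$
For the cross part, Lemma \ref{lem:HP} together with Assumption \ref{ass:dim}(C,D) gives $|\inp{\z_k}{\z_i}| \leq R_+ + c_1\sqrt{d\log(n/\delta)} \lesssim d/(n\sqrt{C})$, while the inductive ratio bound together with $\Delta_\pm \geq n_\pm/(2n)$ yields $\sum_k \ellp_{k,t} \leq 16c_1^2\, n\, \Delta_{b_i}\ellp_{i,t}$. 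Multiplying, the total cross magnitude is at most $O\bigl(c_1^4 d\,\Delta_{b_i}\ellp_{i,t}/\sqrt{C}\bigr)$, which is strictly dominated by the negative self term once $C_0$ is large. Hence the flow is strictly inward at the boundary; for sufficiently small $\eta$, a standard first-crossing-time argument (noting $|\alpha_{ij}^{t+1} - \alpha_{ij}^t| \to 0$ as $\eta \to 0$) promotes this to the invariance $|\alpha_{ij}^t| \leq \log(4c_1^2)$ for all $t$.

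\emph{Summed bound and main obstacle.} For the aggregate inequality, the pointwise ratio with $j \in \Tc_1$ and $i \in \Tc_{-1}$ gives $\ellp_{j,\tau} \leq 4c_1^2(\Delta_-/\Delta_+)\,\ellp_{i,\tau}$. Averaging over $i \in \Tc_{-1}$, summing over $j \in \Tc_1$ and $\tau \leq t$, and using $(n_+/n_-)(\Delta_-/\Delta_+) \leq 4$, one obtains $\Lpallmaj \leq 16 c_1^2 \Lpallmin$, so $c_0 = 1/(16c_1^2)$ works. The principal obstacle is the tight cross-term control: it simultaneously requires the inductive bound on $\sum_k \ellp_{k,t}$ \emph{and} the strong geometric bound $|\inp{\z_k}{\z_i}| \ll d/n$, which is exactly where Assumption \ref{ass:dim}(D) is essential; without the extra $n^2\log(n/\delta)$ overparameterization margin, the cross sum could compete with the self term and the invariance might fail. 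A secondary, routine point is the discrete-step overshoot, handled by restricting $\eta$ to be small compared to $1/(d\max_k \ellp_{k,t})$.
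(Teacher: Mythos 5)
Your proposal is correct and follows essentially the same route as the paper's proof of Lemma \ref{lem:key}: an induction on $t$ whose base case uses zero initialization and the tuning $\omega_{\pm}e^{\iota_\pm}\propto\Delta_\pm^{-2}$, an inductive step that splits the one-step change of the log-ratio into a dominant negative self term and cross terms controlled by the induction hypothesis together with Lemma \ref{lem:HP} and Assumption \ref{ass:dim}(C--D), and a small-step-size argument to rule out overshoot (the paper's explicit two-case split at threshold $2c_1^2$ is exactly your band-plus-crossing argument). The aggregate bound $\Lpallmin\geq\Lpallmaj/(16c_1^2)$ is also derived the same way from the pointwise ratio and $\Delta_\pm\asymp n_\pm/n$.
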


This lemma implies that in our high dimensional setting, the contribution of each training sample to the loss gradient is bounded and depends on the size of the group it comes from. This allows us to bound \eqref{eq:bound1} by $cR_+/\sqrt{d}$ to achieve the upper bound of Theorem \ref{thm:main}.

\subsection{The role of the hyperparameters}

Here, we further discuss the special role of  $\Delta_\pm,\iota_\pm$ and $\omega_\pm$ in Theorem \ref{thm:main}. We also emphasize the value of Theorem \ref{thm:main} by contrasting the favorable performance of the VS-loss to that of the LA-loss.

\vspace{5pt}
\noindent\textbf{$\Delta$ mitigates imbalances and spurious correlations.}~We start by emphasizing the critical role of the $\Delta_\pm$ factors. This is formalized in the theorem below, which shows that if $\Delta_\pm=+1$ (hence, if the LA-loss is used instead), then the worst-group error can be significant in the case of extreme imbalances and strong spurious features. 

\begin{theorem}[LA-loss]\label{thm:LALoss}
	Let Assumptions \ref{ass:means} and \ref{ass:dim} hold as stated in Theorem \ref{thm:main}. Consider running GD from zero initialization on the (exponential) LA-loss (i.e. VS-loss with $\Delta_{\pm}=1$). Then, for \emph{any} choices of the hyper-parameters $\iota_\pm,\omega_\pm$, with probability $1-\delta$, the worst-group error as $t\rightarrow\infty$, is bounded below by:
	\begin{align}\label{eq:thm_lb}
	\Rc_{wst, \rm{LA}}^\infty\geq Q\Big(c\,R_+\,\sqrt{\frac{n}{d}}\,\big(\frac{1}{\tau} + \frac{R_-}{R_+}+ 
	{c_1\sqrt{\frac{\log(n/\delta)}{R_+}}} \big)\Big).
	\end{align}
Recall, $\tau=n_{+1}/n_{-1}\geq 1$ is the imbalance ratio. 
\end{theorem}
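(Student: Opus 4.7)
\vspace{5pt}
\noindent\textbf{Proof proposal.}
The plan is to exploit that with $\Delta_\pm=1$ the (exponential) LA-loss is a \emph{weighted} exponential loss: setting $\tilde{\omega}_b:=\omega_b e^{\iota_b}$ we have $\ell'_{i,t}=\tilde{\omega}_{b_i}e^{-y_i\inp{\w^t}{\x_i}}$, so GD coincides with GD on $\sum_i \tilde{\omega}_{b_i} e^{-y_i\inp{\w}{\x_i}}$. Lemma~\ref{lem:HP} certifies linear separability, so I would invoke the standard implicit-bias result for exponential-tailed losses \cite{soudry2018implicit,ji2018risk} to conclude $\bar{\w}^\infty=\w^\star/\|\w^\star\|_2$, where $\w^\star$ solves $\min\|\w\|_2$ subject to $y_i\inp{\w}{\x_i}\geq 1$. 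Crucially, positive per-sample weights $\tilde{\omega}_b$ do not enter this max-margin program, so $\bar{\w}^\infty$ is \emph{independent} of $(\iota_\pm,\omega_\pm)$---this is what makes the stated lower bound uniform over all hyperparameter choices.

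Next I would turn to an explicit analysis of $\w^\star$. Under Assumption~\ref{ass:dim}(D), I would show every training point is a support vector so that $\w^\star=Z^T(ZZ^T)^{-1}\oneb$ with $Z:=[\z_1,\ldots,\z_n]^T$. Writing $ZZ^T=dI_n+\Pi$, the entries of $\Pi$ are controlled by Lemma~\ref{lem:HP}; under Assumption~\ref{ass:dim}(C,D) the deviation $|\|\z_i\|_2^2-d|$ and the off-diagonal row-sums of $\Pi$ are both $O(d/\sqrt{C})$. A first-order Neumann expansion then provides the entrywise estimate $|\alpha^\star_i-1/d|=O(1/(d\sqrt{C}))$ for the dual $\alpha^\star:=(ZZ^T)^{-1}\oneb$, simultaneously certifying strict positivity (hence that every point is an SV) and enabling the next computation.

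With $\w^\star=\sum_i \alpha^\star_i\z_i$, the upper bound on the minority-mean correlation follows from the Lemma~\ref{lem:HP} estimates $\inp{\z_i}{\nub_{-1}}\leq R_{+}+c_1\sqrt{R_+\log(n/\delta)}$ for $i\in\Tc_{-1}$ and $\inp{\z_i}{\nub_{-1}}\leq R_{-}+c_1\sqrt{R_+\log(n/\delta)}$ for $i\in\Tc_{+1}$ (both nonnegative since $R_-\geq 0$):
\[
\inp{\w^\star}{\nub_{-1}}\leq \tfrac{2}{d}\bigl[n_{-1}R_{+}+n_{+1}R_{-}+nc_1\sqrt{R_+\log(n/\delta)}\bigr],
\]
and $\|\w^\star\|_2^2=\oneb^T(ZZ^T)^{-1}\oneb\geq n/(2d)$. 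Dividing, substituting $n_{-1}/n\leq 1/\tau$ and $n_{+1}/n\leq 1$, and factoring $R_+$ gives $\inp{\bar{\w}^\infty}{\nub_{-1}}\leq c\,R_+\sqrt{n/d}\bigl(\tfrac{1}{\tau}+\tfrac{R_-}{R_+}+c_1\sqrt{\tfrac{\log(n/\delta)}{R_+}}\bigr)$. Combined with $\Rc_{wst}^\infty\geq Q\bigl(\inp{\bar{\w}^\infty}{\nub_{-1}}\bigr)$ and monotonicity of $Q$, this yields \eqref{eq:thm_lb}.

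The main obstacle I expect is verifying the support-vector condition: operator-norm proximity $ZZ^T\approx dI_n$ (which yields invertibility) is not enough to guarantee positivity of every $\alpha^\star_i$; rather an \emph{entrywise} perturbation bound on $(ZZ^T)^{-1}$ is needed, which in turn requires controlling $\Pi$ in the induced $\ell_\infty\!\to\!\ell_\infty$ norm rather than spectrally. This step is analogous to the SVM-interpolation arguments in \cite{Vidya,Ke,wang2021benign,ardeshir2021support,cao2021risk} and is exactly where Assumption~\ref{ass:dim}(D)'s quadratic-in-$n$ scaling gets used.
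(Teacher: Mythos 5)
Your proposal is correct, but it takes a genuinely different route from the paper. The paper proves this bound by directly tracking the GD iterates, in the style of \cite{chatterji}: it upper-bounds the correlation $\rho_{t,b}=\inp{\w^t}{\nub_b}$ via Lemma \ref{lem:corrbound}, lower-bounds the norm growth $\|\w^t\|_2$ via Lemma \ref{lem:normbound} (which, notably, does itself invoke the implicit-bias/max-margin result to guarantee that all margins eventually exceed $\tfrac{c_2}{2}\sqrt{d/n}$), and uses the gradient-ratio control of Lemma \ref{lem:ratio2} ($\ellp_{i,t}/\ellp_{j,t}\leq c_3$ uniformly in $t$) to argue that each group's contribution to $L^\prime_{t_0:t,b}$ is proportional to its size, which is where the $1/\tau$ factor enters; the ratio of the two bounds then gives \eqref{eq:thm_lb} as $t\to\infty$. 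You instead run the full two-stage program that the paper explicitly positions itself against: implicit bias to the hard-margin SVM (correctly noting that positive weights $\omega_b e^{\iota_b}$ drop out, which makes the uniformity over hyperparameters transparent), support-vector proliferation to obtain the closed form $\w^\star=Z^T(ZZ^T)^{-1}\oneb$, and an entrywise Neumann estimate $\alpha^\star_i=\tfrac{1}{d}(1+O(1/\sqrt{C}))$ — you correctly identify that $\ell_\infty\!\to\!\ell_\infty$ control of the perturbation, not merely spectral control, is what certifies dual nonnegativity and hence the KKT conditions, and that Assumption \ref{ass:dim}(C--D) supplies exactly the needed row-sum bounds via Lemma \ref{lem:HP}. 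The resulting computation of $\inp{\w^\star}{\nub_{-1}}/\|\w^\star\|_2$ recovers \eqref{eq:thm_lb} with the same three terms. Your route buys a clean closed form, two-sided control (it would give a matching upper bound on the minority correlation essentially for free), and an argument that reuses the support-vector-proliferation machinery the paper only discusses informally in Section \ref{sec:interpolation}; the paper's route buys uniformity with the proof of Theorem \ref{thm:main} (same lemmas, no need to verify dual positivity) and extends more readily to the VS-loss where the relevant program is the cost-sensitive SVM \eqref{eq:cs-svm} rather than the vanilla one. Both arguments consume the same assumptions, including the $d=\Omega(n^2\log(n/\delta))$ condition.
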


Theorem \ref{thm:LALoss} captures the role played by both the imbalance ratio and the spurious correlations in hindering the classification task. Specifically, \eqref{eq:thm_lb} suggests that the worst-group error of the LA-loss increases when either (i) the imbalance ratio $\tau$, or, (ii) the SNR of spurious features, increase. Let us focus on a ``difficult'' classification scenario, where both of the above occur. We will use Theorems \ref{thm:main} and \ref{thm:LALoss} to show that VS-loss can still achieve low error, while the LA-loss fails. 
To be concrete, consider $n=\Theta(d^\eps)$ for  small $\eps>0$ and $R_+=\Theta(d^\beta), \beta\in(1/2,1-\eps)$. Furthermore, let $0\leq R_-/R_+=O(1/\tau)$ and imbalance ratio $\tau = \Omega(R_+\sqrt{n/d})=\Omega(d^{\beta-\frac{1-\eps}{2}})\gg 1.$ Then, from Theorems \ref{thm:main} and \ref{thm:LALoss}, we have that for $\alpha\in(0,1/2-\epsilon):$
\begin{align}\label{eq:alpha}
 \Rc_{wst, \rm{LA}}^\infty \geq  Q(O(1)) \gg Q(\Theta(d^{\alpha})) \geq \Rc_{wst, \rm{VS}}^\infty .
\end{align}
In words, as the dimension $d$ increases, the worst-group error of the LA-loss is lower bounded by an absolute constant, while that of the VS-loss approaches zero. This certifies the critical role of the multiplicative $\Delta_\pm$ factors in the VS-loss, compared to the LA-loss. In the setting above, it is worth noting that VS-loss achieves vanishingly small worst-group error despite extreme imbalance \emph{and} spurious correlations of substantial strength. For example, choosing $n=d^{1/8}, R_+=d^{3/4}, \tau=d^{5/16}$ and $R_-=0$, \eqref{eq:alpha} shows that the worst-group error of the VS-loss for minorities is no worse than $Q(\Theta(d^{1/4}))$ despite having seen only $n_-=\Theta(d^{-\frac{3}{16}})$ training examples from the corresponding minority group, and, despite high spurious-features SNR, i.e. $\|\mub_s\|_2=\|\mub_c\|_2$. 



\vspace{5pt}
\noindent\textbf{$\iota, \omega$ help in early training phase.} It is clear from Theroem \ref{thm:LALoss} and the discussion above that the hyper-parameters $\iota_\pm,\omega_\pm$ of the LA-loss are not sufficient to successfully classify samples from minority groups when spurious features SNR is high. On the other hand, the same parameters play an inconspicuous role in ensuring the favorable performance of the VS-loss as stated in Theorem \ref{thm:main}; specifically, in \eqref{eq:Delta}. The proof of the theorem  makes clear that this condition is introduced to ensure good starting conditions for the GD updates. In particular, the second condition in \eqref{eq:Delta} is essential for our proof, and while we cannot rule out the possibility that this is a technical artifact, we do want to emphasize that the following findings are consistent with what \eqref{eq:Delta} suggests to be necessary. First, the experiments performed in \cite{VSloss} (see also \cite{Autobalance}) confirm that the VS-loss achieves better worst-group error on benchmark datasets (e.g. Waterbirds) when \emph{both} $\Delta_\pm$ and $\iota_\pm$ are tuned. Paraphrasing \cite{VSloss,Autobalance}: ``the hyperparameters $\Delta_\pm$ and $\iota_\pm$ help synergistically in promoting equitable treatment across group''. Complementing their experiments, \cite{VSloss} gave a plausible theoretical justification of the beneficial role of $\iota$'s at the beginning of training by arguing that it favors larger gradients for the minorities at initial epochs. In proving Theroem \ref{thm:main}, we need to balance the gradients between minorities and majorities, which is in fact quite related and can be seen as an alternative justification of the empirically observed beneficial role of $\iota$'s at the initial phase of training. A second worth mentioning outcome of our analysis is that the conditions in \eqref{eq:Delta} are on par with the heuristic strategies proposed in previous literature. Specifically, \cite{Menon,VSloss,Autobalance} recommends tuning $\iota_\pm=-c\log(n_\pm/n)$ 
for some hyperparameter $c>0$. As a final comment, note from Eqn. \eqref{eq:Delta} that $\Delta_\pm$ is set proportional to the corresponding's class frequency $n_\pm/n$. That is, minorities are assigned smaller $\Delta$ values. On the other hand, $\iota$'s (and $\omega$'s) get higher values for minorities. 


\subsection{Learning with label noise}


The proposition below (following similar to Theroem \ref{thm:main}) shows that, even when a (small, but) constant fraction of the labels are flipped, VS-loss can achieve Bayes error with increasing $d$. Thus, it can accomplish \emph{harmless interpolation} / \emph{benign overfitting} of noisy, imbalanced and spuriously correlated data.   

\begin{corollary}[Benign overfitting] In the setting of Theorem \ref{thm:main}, further allow for the possibility that labels are flipped with probability $0<\xi<1/C_0$ before training. Then, provided the VS-loss is tuned as in \eqref{eq:Delta}, the worst-error is bounded with probability at least $1-\delta$ as $\Rc_{wst, \rm{VS}}^\infty\leq \xi + Q\big( c {R_+}/{\sqrt{d}}\big).$

\end{corollary}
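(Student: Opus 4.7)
The strategy is to run the argument of Theorem~\ref{thm:main} on a refinement of the training set that separates clean and flipped samples. For a flipped sample, the effective vector $\z_i=y_i\x_i$ no longer has mean $\nub_{b_i}=[\mub_c;\,b_i\mub_s]$ but $[-\mub_c;\,b_i\mub_s]$: the core component is reversed while the spurious component still agrees with the stored group $b_i$. Hence each stored group $b\in\{\pm 1\}$ splits into a clean subset $\Tc_b^{\mathrm{c}}$ and a noisy subset $\Tc_b^{\mathrm{n}}$, and a Hoeffding bound gives $|\Tc_b^{\mathrm{n}}|\in[\xi n_b/2,\,2\xi n_b]$ with probability $1-\delta$, provided $\xi<1/C_0$ and Assumption~\ref{ass:dim}(A) holds. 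Write $L^{\prime,\mathrm{c}}_{0:t,b}$ and $L^{\prime,\mathrm{n}}_{0:t,b}$ for the corresponding sub-sums of negative loss derivatives, and $L^\prime_{0:t}$ for their total.

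First, I would lift Lemma~\ref{lem:HP} to this refined partition. All per-sample bounds use only Gaussianity of $\z_i-\mathbb{E}\z_i$ and the boundedness of $\mathbb{E}\z_i$, both of which persist under label flips; the same Bernstein-type arguments yield, for $i\in\Tc_{\pm b}^{\mathrm{n}}$, concentration of $\z_i^\top\nub_b$ around $-R_{\pm}$ with the same error, together with analogous sign-flipped estimates on cross products $\z_i^\top\z_j$ when one or both samples are noisy. Substituting these into the GD unfolding used to derive \eqref{eq:bound1} yields
\begin{align*}
\inp{\w^t}{\nub_b}\;\gtrsim\;&\tfrac{1}{2}R_+\,L^{\prime,\mathrm{c}}_{0:t,b}+\tfrac{1}{2}R_-\,L^{\prime,\mathrm{c}}_{0:t,-b}\\
&-R_-\,L^{\prime,\mathrm{n}}_{0:t,b}-R_+\,L^{\prime,\mathrm{n}}_{0:t,-b}-\tfrac{c_1}{C}R_+\,L^\prime_{0:t},
\end{align*}
while the upper bound $\|\w^t\|_2^2\lesssim (d\vee R_+)(L^\prime_{0:t})^2$ holds with the same constants. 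The adversarial term is $-R_+\,L^{\prime,\mathrm{n}}_{0:t,+1}$ in the minority direction $b=-1$, i.e.\ the flipped majority samples.

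Next, I would prove a noisy-robust version of Lemma~\ref{lem:keymain}: the ratio bound $\ellp_{i,t}/\ellp_{j,t}\le 4c_1^2\Delta_{b_j}/\Delta_{b_i}$ persists irrespective of the noise status of $i,j$. The original proof controls this ratio by an inductive bound on $|\inp{\w^t}{\z_i}|$ that uses only the concentration estimates of Lemma~\ref{lem:HP} and the VS tuning in \eqref{eq:Delta}; both inputs are preserved by the previous step. Summing over noisy samples within a stored group then gives
\[
L^{\prime,\mathrm{n}}_{0:t,b}\;\le\;16c_1^2\,\frac{|\Tc_b^{\mathrm{n}}|}{|\Tc_b^{\mathrm{c}}|}\,L^{\prime,\mathrm{c}}_{0:t,b}\;\le\;C'\xi\,L^{\prime,\mathrm{c}}_{0:t,b},
\]
so flipped samples contribute only an $O(\xi)$ fraction of the gradient mass of their stored group. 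Plugging back and taking $\xi<1/C_0$ small enough to absorb the resulting constants yields $\inp{\w^t/\|\w^t\|_2}{\nub_b}\ge cR_+/\sqrt{d}$ with a slightly reduced absolute constant $c$, so the worst-group error with respect to the \emph{clean} distribution $\Pc$ is at most $Q(cR_+/\sqrt{d})$. The additive $\xi$ in the stated bound is then the standard total-variation estimate $\Pro_{\tilde\Pc}(yf(\x)<0\mid g)\le\xi+\Pro_{\Pc}(yf(\x)<0\mid g)$, accounting for the probability that a test label is itself flipped.

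The main obstacle is this noise-robust extension of Lemma~\ref{lem:keymain}: flipped samples have $\z_i$ pointing in the ``wrong'' core direction, so one must verify that the inductive bound on $|\inp{\w^t}{\z_i}|$ does not deteriorate for such samples. The saving observation is that the dominant contribution to $\inp{\w^t}{\z_i}$ comes from the high-dimensional self-term $\ellp_{i,t}\|\z_i\|_2^2\asymp\ellp_{i,t}\,d$, which is insensitive to noise status, while the ``signal'' and cross terms remain of order $\ellp_{i,t}R_+$ and $\ellp_{j,t}\sqrt{d\log(n/\delta)}$ respectively by the lifted Lemma~\ref{lem:HP}. Under Assumption~\ref{ass:dim}(C,D) these remain subdominant whenever $\xi<1/C_0$, and the induction closes as in the original argument.
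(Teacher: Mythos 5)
The paper never writes out a proof of this corollary --- it is asserted to ``follow similar to Theorem~\ref{thm:main}'' --- so your proposal is being judged against the argument the authors presumably intend. Your architecture is the right one: split each observed group into clean and flipped subsets, re-derive Lemma~\ref{lem:HP} for the flipped samples (whose mean is $[-\mub_c;\,b_i\mub_s]$, exactly as you say), check that Lemma~\ref{lem:keymain} survives, and absorb the adversarial gradient mass. Your justification of the noise-robust ratio bound is also sound for the right reason: the induction in Lemma~\ref{lem:key} uses only $\|\z_i\|_2^2\asymp d$, the cross-product bound $|\inp{\z_i}{\z_j}|\le R_++c_1\sqrt{d\log(n/\delta)}$, and the $\Delta$-tuning, none of which sees the sign of the core component.

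The gap is in the counting step, and it is not cosmetic. A flipped sample changes its \emph{observed} group: if the true group is $\tilde b_i=\tilde y_ia_i$, then after flipping $b_i=y_ia_i=-\tilde b_i$. Hence the number of noisy samples observed in group $b$ concentrates around $\xi\,\tilde n_{-b}$ (the size of the \emph{other} true group), not around $\xi n_b$ as your Hoeffding claim $|\Tc_b^{\mathrm{n}}|\in[\xi n_b/2,2\xi n_b]$ asserts. For the observed majority group this only helps, but for the observed minority group under the extreme imbalance that Theorem~\ref{thm:main} explicitly permits (constant $\xi$, $\tau\to\infty$), we get $|\Tc_{-1}^{\mathrm{n}}|\approx\xi\tilde n_{+1}=\xi\tau\,\tilde n_{-1}\gg|\Tc_{-1}^{\mathrm{c}}|$, so the bound $L^{\prime,\mathrm{n}}_{0:t,-1}\le C'\xi L^{\prime,\mathrm{c}}_{0:t,-1}$ fails. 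This matters most in the \emph{majority} direction $\nub_{+1}$: the flipped majority samples sit in the observed minority group, inherit the inflated weight $1/\Delta_{-1}\propto n/n_{-1}$, and have mean $-\nub_{+1}$. A short computation with the ratio bound gives $L^{\prime,\mathrm{n}}_{0:t,-1}=\Theta(L^{\prime,\mathrm{c}}_{0:t,+1})$ when $\xi\tau\gg 1$ (the factor $\xi$ from the population count is exactly cancelled by the factor $n_{+1}/n_{-1}\approx 1/\xi$ from the weights), so the adversarial term $-R_+L^{\prime,\mathrm{n}}_{0:t,-1}$ is comparable to the signal term $\tfrac12 R_+L^{\prime,\mathrm{c}}_{0:t,+1}$ with an unfavorable constant, and the lower bound on $\inp{\w^t}{\nub_{+1}}$ does not close. (A minor related slip: the term $-R_+L^{\prime,\mathrm{n}}_{0:t,+1}$ in the minority direction comes from flipped \emph{minority} samples observed as majority, not ``flipped majority samples.'') Your argument does go through when $\xi\tau=O(1)$, i.e.\ bounded imbalance, but to cover the full regime of Theorem~\ref{thm:main} you would need either an additional hypothesis of this kind or a genuinely different handling of group migration (e.g.\ tuning $\Delta_\pm$ to estimated clean group sizes); as written, the proposal silently assumes away the hard case.
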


As in Theorem \ref{thm:main}, the corollary shows that the VS-loss worst-group error approaches the noise statistical floor provided that $n=O(d^\epsilon)$, $R_+=\Theta(d^\beta)$ for $\beta\in(1/2,1-\epsilon]$ and $d\rightarrow\infty$. Note that in this highly overparameterized regime the training error is zero (in fact the classifier linearly interpolates the data as we show in Section \ref{sec:interpolation}). Hence, this is an instance of benign overfitting, see \cite{bartlett2020benign, chatterji}.

\subsection{Proliferation of support vectors} \label{sec:interpolation}
In a sufficiently overparameterized regime where the good event of Lemma \ref{lem:HP} holds, Lemma \ref{lem:keymain} ensures that the gradient ratio of any two data points is bounded by the inverse sample ratio of their corresponding groups, which is constant with respect to the iteration numbers $t$. This result can also translate to a bound on the respective margins. To see this, denote $\bar{\w} = \frac{\w}{\|\w\|_2}$. Then, for $i,j\in[n]$,
\begin{align*}
|\Delta_{b_i}\z_i^T\bar{\w}^\infty - \Delta_{b_j}\z_j^T\bar{\w}^\infty| &= \lim_{t\to\infty} \frac{|\Delta_{b_i}\z_i^T\w^t-\Delta_{b_j}\z_j^T\w^t|}{\|\w^t\|_2}\\
&\leq \lim_{t\to\infty} \frac{\log(4c_1^2\frac{\Delta_{b_j}^2\, \omega_{b_j}e^{\iota_{b_j}}}{\Delta_{b_i}^2\,\omega_{b_i}e^{\iota_{b_i}}})}{\|\w^t\|_2}\\
&= 0,
\end{align*}
where the inequality follows from Lemma \ref{lem:keymain}, and the last line is true, since by linear separability of data, $\lim_{t\to\infty}\|\w^t\|_2 = \infty$ (see Lemma \ref{lem:GDprop}). Equivalently, 
\begin{align}
	\forall i,j \in [n],\quad  \Delta_{b_i}\z_i^T\bar{\w}^\infty = \Delta_{b_j}\z_j^T\bar{\w}^\infty.& \label{eq:sv}
\end{align}
Thus, all of the samples have the same \emph{scaled} margins $\Delta_{b_i}\z_i^T\bar{\w}$ with respect to the final classifier, and samples within a specific subgroup $(y,a)$, all lie on the same hyperplane.

Furthermore, \cite{VSloss} showed that GD on the VS-loss converges (in direction) to the following cost-sensitive SVM: 
\begin{align}\label{eq:cs-svm}
\w_{\text{CS-SVM}}:=\argmin\|\w\|_2 \quad \text{s.t.} \quad \Delta_{b_i}\z_i^T\w\geq 1,\, i\in[n].
\end{align}
That is, $\bar{\w}^\infty=\bar{\w}_{\text{CS-SVM}}$. On the other hand, from optimality of $\w_{\text{CS-SVM}}$, the constraint is active for at least one of the samples. Combining this with \eqref{eq:sv} certifies that
\begin{align*}
	&\forall i\in[n],\quad \Delta_{b_i}\z_i^T\bar{\w}^\infty = \Delta_{b_i}\z_i^T\bar{\w}_{\text{CS-SVM}}=\frac{1}{\|{\w}_{\text{CS-SVM}}\|_2},
\end{align*}
which implies for all $i \in [n]$,
\begin{align} \label{eq:interpolation}
	&\frac{\|{\w}_{\text{CS-SVM}}\|_2}{\|\w^t\|_2}\cdot\z_i^T\w^t \xrightarrow{t\rightarrow \infty} \z_i ^T \w_{\text{CS-SVM}} = \frac{1}{\Delta_{b_i}}.
\end{align}
In other words, in the setting of Theorem. \ref{thm:main}, all datapoints are support vectors of the cost-sensitive SVM and the gradient descent solution eventually interpolates feature-label pairs $\{(\x_i,y_i/\Delta_{b_i}): i\in[n] \}$. This finding creates a link to the phenomenon of \emph{support vector proliferation}, which is previously shown to occur in overpaprameterized linear settings for the CE loss \cite{wang2021benign,hsu2021proliferation, Vidya, Ke}. Specificaly, we have shown support vector proliferation for the cost-sensitive SVM (and thus for the VS-loss). Interestingly, our proof differs from those in \cite{wang2021benign,hsu2021proliferation} in that we directly analyze the VS-loss updates rather than leveraging first an implicit-bias result and then working directly with the corresponding max-margin classifier. Overall, our proof here is more direct and simpler. On the other hand, it does not yield optimal overparametrization conditions. In particular, extrapolating the result of \cite{Ke} for the SVM, we conjecture that \eqref{eq:interpolation} holds under relaxed condtions $d>Cn\log n$.

\subsection{Extensions} \label{sec:extension}
\noindent \textbf{Logistic loss.} Our analysis considers an exponential version of the VS-loss. The results can be extended to its logistic counterpart, under an additional assumption for the late phase of training. Intuitively, this is made possible because these two loss functions share similar behavior when data points have positive margin. More precisely, the logistic loss can be bounded by exponential loss as follows (see e.g. \cite{rosset2004boosting}):
\begin{align}\label{eq:logloss}
	\z^T\w\geq0 \implies \frac{1}{2}\ell_\text{exp}(\z^T\w) \leq \ell_\text{log}(\z^T\w)\leq \ell_\text{exp}(\z^T\w).
\end{align}
Now, under the event of Lemma \ref{lem:HP}, the data is linearly separable (see Lemma \ref{lem:maxmarg}), and by the implicit bias results of GD \cite{VSloss}, $\w^t$ will converge (in direction) to the solution of \eqref{eq:cs-svm} in this setting. Thus, there exist an iteration $t_0>0$ that for any $t>t_0$, $\w^{t}$ achieves positive margin on all the datapoints, satisfying the condition in \eqref{eq:logloss}. Thus, $\ell_{i,t}$ will behave like an exponential function as $t$ grows. Provided that logistic loss satisfies \eqref{eq:keymain} for a $t_1\geq t_0$, we can show that Lemma \ref{lem:keymain} holds for all $t>t_1$; thus the bound in Theorem \ref{thm:main} will be valid for logistic loss as well. Note that the assumptions on the behavior of the gradient at $t_1\geq t_0$ and on margin positivity essentially restrict us to a \emph{late-phase of training}. Although these conditions are restrictive for this extension, it is worth noting that these are only sufficient conditions for providing a bound on \eqref{eq:bound1} in the proof of Theorem \ref{thm:main}. It is interesting and exciting future work understanding the behavior of VS-loss at earlier training phases.



\vspace{5pt}
\noindent\textbf{Initialization.} Our theorems require gradient descent with zero initialization. Our proof can also be extended to non-zero initialization provided that $\w^0$ lies in some $O(1/\sqrt{d})$ ball. In highly overparameterized settings, this requires the initial weights to remain close to zero. This restriction follows from the proof of Lemma \ref{lem:keymain}, where we need to achieve a lower bound over the ratio of loss derivatives at initialization (see Remark \ref{rem:init}). Similar to previous discussion, this restriction is not necessary for the results to hold, and may be relaxed in future analysis.

\section{Further comparisons to  related works}\label{sec:more-related}

\noindent\textbf{Ref. \cite{VSloss}.} Ref. \cite{VSloss} proposed the VS-loss,  empirically verified its superior performance, and presented a first formal analysis of the distinct (and eventually, synergistic) role of the individual hyperparameters $\iota$ and $\Delta$. Regarding $\Delta$'s, \cite{VSloss} showed their critical role following the two-stage approach described in the Introduction. First comes an optimization-analysis stage, showing the convergence of GD on VS-loss to the cost-sensitive SVM in \eqref{eq:cs-svm}.
The statistical-analysis stage that follows, leverages this connection to derive sharp asymptotics for the worst-case error of the convex program in \eqref{eq:cs-svm}. However, this analysis is inconclusive about $\iota$'s since it focuses at the late (aka converging) phase of training. To argue the benefit of $\iota$'s at the early phase, \cite{VSloss} relied on a separate argument via gradient calculations at initialization. Our analysis of the VS-loss has several distinct features, which we believe bring additional insights. First, 
unlike \cite{VSloss}, our setting allows us to focus explicitly on the effect of spurious correlations (additional to data imbalances). Second, instead of the above two-stage approach, our error bounds follow by directly tracking the GD updates. Interestingly, this analysis reveals simultaneously why both $\iota$'s and $\Delta$'s need to be tuned. Also, our results are non-asymptotic, they hold beyond the so-called proportional asymptotic regime and can be readily extended to sub-Gaussian models. On the other hand, the analysis in \cite{VSloss} is sharp, e.g. allowing them to find the optimal $\Delta$-tuning and study precise tradeoffs between fairness metrics. Also, our analysis regime comes with its own restrictions, e.g. requires $d = \Omega(n^2)$, which we want to relax in future work. 

\vspace{5pt}
\noindent\textbf{Ref. \cite{chatterji}.} On a technical level, our analysis builds on the approach introduced by \cite{chatterji}.
However, while they only considered balanced binary mixtures, we have extended the analysis to imbalanced mixtures with spurious attributes. Also, we have shown that the analysis extends to minimization of a richer  parametric family of the CE loss: the VS-loss. Aside from a more involved data model, the key innovation here lies in proving that once appropriately initialized, the VS-loss ensures that the ratio of loss derivatives between minority/majority samples is inversely proportional to the corresponding ratio of $\Delta$ parameters. The approach of \cite{chatterji} was also recently adapted by \cite{wang2021importance} to study the balanced error of a weighted polynomial loss in a binary imbalanced GMM. Also, the lower bound of Theroem \ref{thm:LALoss} is reminiscent of their result that CE loss can fail in binary classification of imbalanced mixtures. However, our model is more general. Also, their assumptions on scaling of problem parameters are more restrictive than ours (e.g. they require $\|\mub_c\|^2=\Omega(n^2)$ and $d=\Omega(n^3)$).


\section{Numerical Results}

\begin{figure}[!t]
	\centering
	\includegraphics[width=3in]{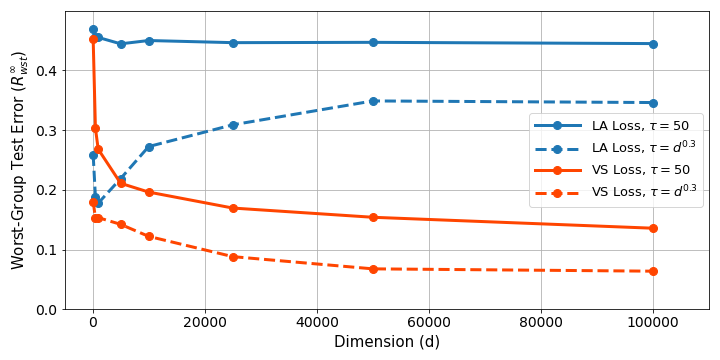}
	\caption{
	Worst-group  error for GMM data (see \eqref{eq:gmm_def}) with $n=200$, $R_+ = d^{0.6}/4$, $\mub_c = \mub_s = \eb_1\sqrt{\nicefrac{R_+}{2}} $ and $\Delta_\pm = n_\pm / n$. 
		The error is evaluated for fixed high imbalance ratio $\tau=50$ (solid lines), and for $\tau$ growing with $d$ (dashed lines). As the model becomes more overparametrized, VS-loss more effectively handles both group-imbalance and spurious correlations compared to the LA-loss.}
	\label{fig:sim}
\end{figure}

Figure \ref{fig:sim} shows results of a numerical simulation performed on synthetic data generated according to \eqref{eq:gmm_def}.
We measured how the worst-group error of both VS-loss and LA-loss evolves with increasing dimension in a difficult scenario as follows. (i) Large imbalance ratio $\tau$: we considered a case of constant imbalance ratio ($\tau=30$) and a case of ratio changing with $d$ ($\tau=d^{0.3}$). (ii) Strong spurious-features SNR: we set $R_+=\Theta(d^{0.6})$ and $R_-=0$, thus the SNR of spurious features is same as the core features. 
For more efficient numeric implementation, instead of running GD on the VS-loss/LA-loss, we evaluated the test error of the cost-sensitive SVM solution in \eqref{eq:cs-svm} as suggested by \cite{VSloss}. For VS-loss, we choose $\Delta_\pm$ proportional to $n_\pm$ to satisfy \eqref{eq:Delta}. For the LA-loss, we choose $\Delta_\pm = 1$, thus \eqref{eq:cs-svm} gives the SVM solution. The simulation results are in alignment with our finding in \eqref{eq:alpha}: The test error of the LA-loss does not decrease with increasing overparametrization, while VS-loss error is vanishingly small. 

For a comparison between LA-loss and VS-loss beyond synthetic data, on real datasets, refer to \cite{VSloss}.
%
%

\section{Conclusion}

In this work, we continued the study on VS-loss for handling imbalanced scenarios with spurious features. We investigated the role of hyper-parameters in a Gaussian mixture setting, proving that models trained with VS-loss achieve vanishing worst-group error in sufficiently overparameterized regimes, provided a tuning strategy that is consistent with previous heuristic findings. We further showed that GD on the VS-loss converges to an interpolating solution on the features with a new label encoding that yields larger margin for the minorities. The proof relies on a derivative ratio bound provided in Lemma \ref{lem:keymain}, which requires rather high overparameterization level $d=\Omega(n^2\log(n))$. We note that this is a technical requirement that is not necessary for proving the results and is interesting to relax in future work.

We only considered the binary case, here. However, we believe that the results can be extended to the multi-class case. Another future direction is the extension of our analysis to non-linear settings, see e.g. \cite{frei2022benign}. 
Finally, it is worth noting that this analysis only considers the terminal phase of training. Further investigations on the starting phase can provide better insights on the effects of early stopping on generalization.

%


\bibliographystyle{IEEEtran}
\bibliography{refs}

\newpage
\appendix
\section{Appendix}

\section{Proofs}




This section includes the proof of Theorem \ref{thm:main}. Without further explicit reference, we assume $n$ training data generated as in Assumption \ref{ass:means}. For convenience, we introduce the following notation. For each $i\in[n]$, we denote $\z_i:=y_i\x_i$ and $b_i:=y_i\cdot a_i\in\{\pm1\}$. We also let $\nub_{\pm}:=[\mub_c \, ; \, \pm\mub_{s}]\in\R^d$. 
Then, for all $i\in[n]$, we have 
\begin{align} \label{eq:zmodel}
\z_i:=y_i\x_i=\nub_{b_i}+\q_i,\,\q_i\sim\Nn(0,I_d), i\in[n].
\end{align}
Recall also
\begin{align*}
	&R_+=\|\mub_c\|_2^2+\|\mub_s\|_2^2=\|\nub_{\pm}\|_2^2, \\ &R_-=\|\mub_c\|_2^2-\|\mub_s\|_2^2=\nub_+^T\nub_-,
\end{align*}
and,
$$
\ellp_{i,t}:=\Delta_{b_i}\ell(\x_i^T\w^{t+1},(y_i,a_i))=\Delta_{b_i}\omega_{b_i}e^{-\Delta_{b_i}\z_i^T\w^{t+1}+\iota_{b_i}}.
$$
%
%
In addition, throughout the sections, we use the following shorthand notations for iteration $t$ and $b\in\{\pm1\}$,
\begin{align*}
	\rho_{t,b}&:=\inp{\w^t}{\nub_{b}}\\
	\Lpt&:=\sum_{i:b_i=b}\ellp_{i,t}\\
	\Lpall&:=\sum_{\tau=0}^t\Lptau:=\sum_{\tau=0}^t\sum_{i:b_i=b}\ellp_{i,\tau}\\
	\Lpts&:=\sum_{i=1}^n\ellp_{i,t} = \Lpt + \Lptneg\\
	\Lpalls&:=\sum_{\tau=0}^t\Lptaus:=\sum_{\tau=0}^t\sum_{i=1}^n\ellp_{i,\tau}.
\end{align*} 

\subsection{Preliminary lemmas}


The following lemma shows that for the GMM of Section \ref{sec:GMM}, the per-group risk is a simple function of the correlation between the estimator and the corresponding mean vector.

\begin{lemma}[Worst-group error for GMM] \label{lem:wsterror}
	The per-group classification error of an estimator $\w$ can be bounded in terms of the correlation as follows:
	\begin{align}
	\Rwst=Q\big(\min_{b\in\{\pm 1\}}\inp{\nicefrac{\w}{\|\w\|_2}}{{\nub_{b}}}\big).
	\end{align}
\end{lemma}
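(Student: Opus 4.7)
The plan is to rewrite the per-group error event as a one-dimensional Gaussian tail, observe that it depends on the group only through $b = y\cdot a$, and then convert the resulting maximum over $b$ into a $Q$-function evaluated at the minimum of its arguments (since $Q$ is monotone decreasing).

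First, fix a group $g = (y,a) \in \Gc$. The classifier $f_{\w}(\x) = \langle \w, \x\rangle$ misclassifies a test point when $y \langle \w, \x\rangle < 0$, i.e.\ when $\langle \w, y\x\rangle < 0$. Under Assumption~\ref{ass:means}, conditional on $(y,a)$ the feature vector satisfies $\x \sim \Nn([y\mub_c\,;\,a\mub_s], \Id)$, so
\begin{align*}
y\x \,\big|\, (y,a) \,\sim\, \Nn\!\left(\begin{bmatrix}\mub_c\\ ya\,\mub_s\end{bmatrix}, \Id\right) \,=\, \Nn(\nub_{ya}, \Id),
\end{align*}
using $y^2 = 1$ and the definition $\nub_{b} = [\mub_c\,;\,b\mub_s]$. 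Consequently $\langle \w, y\x\rangle \,|\, (y,a)$ is a univariate Gaussian with mean $\langle \w, \nub_{ya}\rangle$ and variance $\|\w\|_2^2$.

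Second, I would compute the conditional error directly:
\begin{align*}
\Pro\{\langle \w, y\x\rangle < 0 \,|\, (y,a)\} \,=\, Q\!\left(\frac{\langle \w, \nub_{ya}\rangle}{\|\w\|_2}\right) \,=\, Q\!\left(\left\langle \tfrac{\w}{\|\w\|_2}, \nub_{ya}\right\rangle\right).
\end{align*}
Crucially, this expression depends on $(y,a)$ only through the product $b := ya$, so the four groups in $\Gc$ collapse to two error values, indexed by $b \in \{\pm 1\}$.

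Finally, I would take the maximum over $g \in \Gc$ in the definition \eqref{eq:worst}, which by the previous step is the same as the maximum over $b \in \{\pm 1\}$. Since $Q(\cdot)$ is strictly decreasing, $\max_b Q(\langle \w/\|\w\|_2, \nub_b\rangle) = Q(\min_b \langle \w/\|\w\|_2, \nub_b\rangle)$, giving the claimed identity. There is no real obstacle here; the only thing to double-check is the collapse of the four-way maximum to a two-way one via the $b = ya$ reparametrization, which is exactly the motivation given in Section~\ref{sec:GMM} for grouping the data by $b$.
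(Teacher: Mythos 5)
Your proposal is correct and follows essentially the same route as the paper's proof: both reduce the per-group error to a univariate Gaussian tail via the decomposition $y\x\,|\,(y,a)\sim\Nn(\nub_{ya},\Id)$ and then use monotonicity of $Q$ to turn the maximum over groups into $Q$ of a minimum. Your explicit verification that the four-way maximum over $g\in\Gc$ collapses to a two-way one through $b=ya$ is a small extra step the paper leaves implicit, but there is no substantive difference.
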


\begin{proof}
	We start by the definition of $\Rwst$ from (\ref{eq:worst}) and the data model described in (\ref{eq:zmodel}),
	\begin{align*}
		\Rwst &= \max_{b \in \{\pm 1\}} P_{(\x,a,y) \sim \Pc} \Big(\inp{\nicefrac{\w}{\|\w\|_2}}{\z} < 0 \cond y\cdot a=b\Big)\\
		&= \max_{b \in \{\pm 1\}} P_{\q \sim \Nn(0,I_d)} \Big(\inp{\nicefrac{\w}{\|\w\|_2}}{\nub_{b}} < \inp{\nicefrac{\w}{\|\w\|_2}}{\q} \Big)\\
		&= \max_{b \in \{\pm 1\}} P_{G \sim \Nn(0,1)} \Big(\inp{\nicefrac{\w}{\|\w\|_2}}{\nub_{b}} < G \Big)\\
		&= \max_{b \in \{\pm 1\}} Q \Big(\inp{\nicefrac{\w}{\|\w\|_2}}{\nub_{b}}\Big)\\
		&= Q\big(\min_{b\in\{\pm 1\}}\inp{\nicefrac{\w}{\|\w\|_2}}{{\nub_{b}}}\big)\,.
	\end{align*}
\end{proof}

The next lemma is technical and it provides useful high-probability bounds on several quantities of interest under Assumption \ref{ass:dim}.

\begin{lemma}[Good event $\Ec$]\label{lem:good}
	There exists absolute constant $c_1\geq 1$ and $C_0:=C_0(c_1)>0$ such that for all problem parameters and success probability $1-\delta$ satisfying Assumptions \ref{ass:means} and \ref{ass:dim}, the following statements hold simultaneously for $b \in \{\pm1\}$: 
	\begin{subequations}\label{eq:good event}
		\begin{flalign}
			&\forall i\in[n]\,: \|\z_i\|_2^2 \leq c_1 d
			\label{eq:z_norm_ub}&
			\\
			&\forall i\in[n]\,: \|\z_i\|_2^2 \geq d/c_1
			\label{eq:z_norm_lb}&
			\\
			&\forall i\in\Tc_{b}\,:\,|\z_i^T\nub_b - R_+|\leq  R_+/2
			\label{eq:zmu_same}&
			\\
			&{\forall i\in\Tc_{-b}\,:\, |\z_i^T\nub_{b}- R_{-}| \leq c_1\sqrt{R_+}\sqrt{\log(n/\delta)}
				\label{eq:zmu_nsame_sharp}}&
			\\
			&\forall \Tc_{b}\ni i\neq j\in\Tc_{\pm b}\,:|\z_i^T\z_j - {R_\pm}| \leq c_1\sqrt{d}\sqrt{\log(n/\delta)}.&
			\label{eq:z_align}
		\end{flalign}
	\end{subequations}
	\noindent When all the above hold, we say that the good event $\Ec$ holds.
\end{lemma}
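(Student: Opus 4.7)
The plan is to establish each of the five bounds in \eqref{eq:good event} individually via standard Gaussian concentration and then take a union bound over at most $O(n^2)$ events to obtain the simultaneous claim with failure probability at most $\delta$. Throughout, I will rely on the decomposition $\z_i=\nub_{b_i}+\q_i$, with $\q_i\sim\Nn(0,\Id)$ independent across $i\in[n]$, and on the identities $\|\nub_{\pm}\|_2^2=R_+$ and $\nub_+^T\nub_-=R_-$.

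First, for \eqref{eq:z_norm_ub}--\eqref{eq:z_norm_lb}, I write $\|\z_i\|_2^2 = \|\q_i\|_2^2 + 2\nub_{b_i}^T\q_i + R_+$. By Laurent--Massart, $\|\q_i\|_2^2$ lies within $[d-c\sqrt{d\log(n/\delta)},\,d+c\sqrt{d\log(n/\delta)}]$ with probability at least $1-\delta/(Cn)$, and $\nub_{b_i}^T\q_i\sim\Nn(0,R_+)$ so $|\nub_{b_i}^T\q_i|\leq c\sqrt{R_+\log(n/\delta)}$ on the same event. Both deviations are $o(d)$ under Assumption \ref{ass:dim}(C)--(D): $\sqrt{d\log(n/\delta)} = o(d)$ from (D), and $\sqrt{R_+\log(n/\delta)} \leq \sqrt{(d/n)\log(n/\delta)} = o(d)$ from (C). A union bound over $i\in[n]$ yields \eqref{eq:z_norm_ub}--\eqref{eq:z_norm_lb} with a suitable $c_1$.

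Next, for \eqref{eq:zmu_same} and \eqref{eq:zmu_nsame_sharp}, I decompose $\z_i^T\nub_b = \nub_{b_i}^T\nub_b + \q_i^T\nub_b$, where the deterministic piece equals $R_+$ when $b=b_i$ and $R_-$ when $b=-b_i$. The residual $\q_i^T\nub_b$ is Gaussian with variance $R_+$, so $|\q_i^T\nub_b|\leq c_1\sqrt{R_+}\sqrt{\log(n/\delta)}$ with probability at least $1-\delta/(Cn)$. This immediately gives \eqref{eq:zmu_nsame_sharp} after union bounding over $i\in[n]$. For \eqref{eq:zmu_same}, I additionally invoke Assumption \ref{ass:dim}(B), which guarantees $R_+\geq\|\mub_c\|_2^2\geq C\log(n/\delta)$; for $C$ large enough this forces $c_1\sqrt{R_+\log(n/\delta)}\leq R_+/2$ and upgrades the bound to the required $R_+/2$ form.

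Finally, for \eqref{eq:z_align}, I expand
\[
\z_i^T\z_j = \nub_{b_i}^T\nub_{b_j} + \nub_{b_j}^T\q_i + \nub_{b_i}^T\q_j + \q_i^T\q_j,
\]
where the first term equals $R_+$ if $b_i=b_j$ and $R_-$ otherwise. The two Gaussian linear terms each have variance $R_+$ and are thus bounded in magnitude by $c\sqrt{R_+\log(n/\delta)}$. For the bilinear term $\q_i^T\q_j$ (with $i\neq j$, hence independent $\q_i,\q_j$), Hanson--Wright gives concentration around $0$ at scale $c\sqrt{d\log(n/\delta)}$. Since $d\geq CR_+n\geq R_+$ by Assumption \ref{ass:dim}(C), the $\sqrt{d\log(n/\delta)}$ term dominates, so after a union bound over the at most $n^2$ ordered pairs one obtains \eqref{eq:z_align} with the stated constant $c_1$. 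The only real obstacle is bookkeeping: each of the $O(n^2)$ concentration events must be run at failure probability $\lesssim \delta/n^2$, which introduces an extra $\log n$ factor inside every square root; this factor is harmless because it is already absorbed into the $\log(n/\delta)$ appearing in the statements. Taking a final union bound of the five families of events then produces the good event $\Ec$ with probability at least $1-\delta$, completing the proof.
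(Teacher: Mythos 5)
Your proposal is correct and follows essentially the same route as the paper: the same decompositions of $\|\z_i\|_2^2$, $\z_i^T\nub_b$ and $\z_i^T\z_j$ into deterministic means plus Gaussian fluctuations, the same use of Assumption \ref{ass:dim}(B)--(C) to absorb the lower-order terms, and the same union bound over $O(n^2)$ events. The only (cosmetic) difference is that you invoke Laurent--Massart and Hanson--Wright where the paper uses elementary norm concentration plus conditioning on $\|\q_i\|_2$ to control the bilinear term; both yield the same $c\sqrt{d\log(n/\delta)}$ scale.
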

\begin{proof}
	The proof relies on the following concentration bounds for random Gaussian vectors and variables \cite{vershynin2018high}.
	\begin{fact}\label{fact:g}
		For a random variable $G \sim \Nn(0,\sigma^2)$, we have,
		$$
		\Pro \left(\big| G\big| \geq t \right) \leq 2e^{-t^2/(2\sigma^2)}\,.
		$$
		Equivalently, for a fixed probability $\delta>0$, there exists $c_0>0$ such that with probability at least $1-\delta$,
		$$
		\left|G\right| \leq c_0 \sigma \sqrt{\log(1/\delta)}\,.
		$$
	\end{fact}
	\begin{fact}\label{fact:g_norm}
		Let $\boldsymbol g\in\R^d$ have entries i.i.d $\Nn(0,1)$. 
		Then,
		$$
		\Pro\left(\big|\|\boldsymbol g\|_2 - \sqrt{d}\big|>t\right)\leq 2e^{-ct^2/2}.
		$$
		Specifically, for fixed  $\delta$ such that for some constant $C>0$ it holds $d>C\log(1/\delta)$, there exists $c_0\geq 1$ such that with probability at least $1-\delta$,
		$$
		\sqrt{d}/ c_0 \leq  \|\boldsymbol g\|_2 \leq c_0 \sqrt{d}.
		$$
	\end{fact}
\noindent To prove the lemma, it suffices to show that each of the three events listed below will occur with probability at least $1-\delta/3$. Then, taking a union bound over them, we can show that all of the lemma's statements will be true simultaneously with probability at least $1-\delta$.

\vspace{5pt}
\noindent \textbf{(i) Events \eqref{eq:z_norm_ub} and \eqref{eq:z_norm_lb}:} From \eqref{eq:zmodel}, $\z_i-\nub_{b_i} = \q_i$ is a random vector with i.i.d standard Gaussian entries. Based on Fact \ref{fact:g_norm}, for $\delta/(3n)$ and $i\in[n]$, there exists $c_0\geq1$ that $\sqrt{d}/ c_0 \leq  \|\z_i-\nub_{b_i}\|_2 \leq c_0 \sqrt{d}$ with probability at least $1-\delta/(3n)$. Then, there exists $c_1\geq1$,
\begin{align*}
	&\|\z_i\|_2 \leq c_0\sqrt{d} + \sqrt{R_+} \leq c_1\sqrt{d}\\
	&\|\z_i\|_2 \geq \sqrt{d}/ c_0 - \sqrt{R_+} \geq \sqrt{d}/c_1\,,
\end{align*}
where we used the triangle inequality and $d>CR_+$ for some large $C>0$ by Assumption \ref{ass:dim}. Taking a union bound over all samples $i\in[n]$, events \eqref{eq:z_norm_ub} and \eqref{eq:z_norm_lb} occur with probability at least $1-\delta/3$.

\vspace{5pt}
\noindent \textbf{(ii) Events \eqref{eq:zmu_same} and \eqref{eq:zmu_nsame_sharp}:} Note that for any $i\in\Tc_{\pm b}$, $|\z_i^T\nub_b-R_{\pm}|=|\q_i^T\nub_{b}|$, which is the absolute value of a zero-mean Gaussian random variable with variance $R_+$. From Fact \ref{fact:g}, for each sample $i\in[n]$ and fixed probability $\delta$, there exists positive $c_0$ such that  $$\left|\z_i^T\nub_b-R_{\pm}\right| \leq c_0 \sqrt{R_{+}} \sqrt{\log(n/\delta)},$$ with probability at least $1-\delta/(3n)$. Using Assumption \ref{ass:dim}, specifically $R_+\geq\|\mu_c\|_2^2> C\log(n/\delta)$, we can simplify this bound to the RHS of \eqref{eq:zmu_same}. Thus, with probability at least $1-\delta/3$, events \eqref{eq:zmu_same} and \eqref{eq:zmu_nsame_sharp} hold simultaneously for all samples.

\vspace{5pt}
\noindent \textbf{(iii) Event \eqref{eq:z_align}:} For each pair of samples $(i,j)$ with $i\neq j$, $i \in \Tc_b$ and $j \in \Tc_{\pm b}$, we have,
\begin{align}\label{eq:bound3}
	&|\z_i^T\z_j - {R_\pm}| \leq |\q_i^T\nub_{\pm b}| + |\nub_{b}^T\q_j| + |\q_i^T\q_j|\,.
\end{align}
Similar to (ii), we can show that there exists $c_0\geq 0$ such that with probability at least $1 - \delta/6$, the first two terms on the RHS of \eqref{eq:bound3} are upper-bounded by $c_0 \sqrt{R_{+}} \sqrt{\log(n/\delta)}$, simultaneously for all $(i,j)$ pairs.
It now suffices to bound  $|\q_i^T\q_j|$. First, note that we can choose a large enough $c$ such that $\Pro(\|\q_i\|_2\geq c\sqrt{d})\leq\delta/(12n)$. Now, conditioning on $\|\q_i\|_2$, we have,
\begin{align*}
	\Pro(|\q_i^T\q_j|>t) &\leq \Pro\Big(|\q_i^T\q_j|>t\big|\|\q_i\|_2\leq c\sqrt{d}\Big) \\
	&+ \Pro(\|\q_i\|_2\geq c\sqrt{d})\,.
\end{align*}
We can bound the first term by Fact \ref{fact:g} as follows,
\begin{align*}
	\Pro\Big(|\q_i^T\q_j|>t\big|\|\q_i\|_2\leq c\sqrt{d}\Big)&\leq 2e^{-t^2/(2\|q_i\|_2^2)}\\
	& \leq 2e^{-t^2/(2c^2d)}\\
	& = 2e^{-c't^2/d}.
\end{align*}
Now, a union bound on all pairs of samples yields,
\begin{align*}
	\Pro\Big(\exists i\neq j, |\q_i^T\q_j|>t\Big) \leq 2n^2e^{-c't^2/d} + \delta / 12.
\end{align*}
Choosing $t=c'_0\sqrt{d\,\log(n/\delta)}$ for a large enough $c'_0\geq0$, we have,
\begin{align*}
\Pro\Big(\exists i\neq j, |\q_i^T\q_j|>t\Big)&\leq \delta/6.
\end{align*} 
Thus, with probability at least $1-\delta/6$, $|\q_i^T\q_j|<c'_0\sqrt{d\,\log(n/\delta)}$ for all pairs of samples. Overall, the terms on the RHS of \eqref{eq:bound3}, can be bounded at the same time with probability $1-\delta/3$ as follows:
\begin{align*}
	|\z_i^T\z_j - {R_\pm}| &\leq c_0\sqrt{R_+}\sqrt{\log(n/\delta)} + c'_0\sqrt{d}\sqrt{\log(n/\delta)}\\
		&\leq c_1\sqrt{d}\sqrt{\log(n/\delta)}\,,
\end{align*}
where $c_1$ exists due to Assumption \ref{ass:dim}(C). 
\end{proof}

The good event $\Ec$ provides us with useful bounds on the samples lying in the high dimensional setting of Assumption \ref{ass:dim}. More importantly, this high probability event ensures the data is linearly separable, and helps us derive a bound on the maximum margin. This result is formulated in the following lemma.
\begin{lemma}[Data separability and max-margin]\label{lem:maxmarg} 
	For any $\delta\in(0,1)$ and problem parameters that satisfy Assumption \ref{ass:dim}, specifically  $d\geq Cn\|\mub_c\|_2^2 $, $\|\mub_c\|_2\geq C\sqrt{\log(n/\delta)}$ and $n\geq C$ with probability at least $1-\delta$, the training data are linearly separable. Moreover, under the same conditions, with probability $1-\delta$ the maximum margin is lower bounded as follows for some constant $c_2>0$:
	\begin{align}
	\max_{\|\w\|_2=1}\min_{i\in[n]} \inp{\w}{\z_i} \geq c_2\sqrt{\frac{d}{n}}\,.
	\end{align}
\end{lemma}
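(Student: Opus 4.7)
The plan is to condition on the good event $\Ec$ of Lemma~\ref{lem:good} (which holds with probability at least $1-\delta$ under the stated parameter regime) and then explicitly exhibit a unit-norm linear classifier that separates the data with margin $\Omega(\sqrt{d/n})$. Since the maximum margin is the supremum over all unit-norm $\w$, any such explicit construction provides the desired lower bound.

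The candidate I would choose is the simple ``sum-of-features'' direction
\begin{equation*}
\w \;:=\; \sum_{j=1}^n \z_j,
\end{equation*}
which is a natural one given the rough isotropy of the $\q_i$ in high dimensions. Fix any $i\in[n]$. Then
\begin{equation*}
\inp{\w}{\z_i} \;=\; \|\z_i\|_2^2 \;+\; \sum_{j\neq i}\inp{\z_j}{\z_i}.
\end{equation*}
By \eqref{eq:z_norm_lb} the diagonal term is at least $d/c_1$. For each off-diagonal term, \eqref{eq:z_align} gives $|\inp{\z_j}{\z_i}|\le |R_\pm|+c_1\sqrt{d\log(n/\delta)}$, and since $|R_\pm|\le R_+\le d/(Cn)$ by Assumption~\ref{ass:dim}(C), and $n\sqrt{d\log(n/\delta)}\le d/\sqrt{C}$ by Assumption~\ref{ass:dim}(D), we obtain
\begin{equation*}
\Big|\sum_{j\neq i}\inp{\z_j}{\z_i}\Big| \;\le\; \frac{d}{C} + \frac{c_1 d}{\sqrt{C}}.
\end{equation*}
Taking $C$ (equivalently $C_0$) large enough compared to $c_1$, this can be made smaller than $d/(2c_1)$, so $\inp{\w}{\z_i}\ge d/(2c_1)>0$ uniformly in $i$. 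This already proves linear separability.

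For the margin lower bound, I would then bound $\|\w\|_2$ in essentially the same way:
\begin{equation*}
\|\w\|_2^2 \;=\; \sum_{i=1}^n \|\z_i\|_2^2 \;+\; \sum_{i\neq j}\inp{\z_i}{\z_j} \;\le\; n c_1 d + n^2\bigl(R_+ + c_1\sqrt{d\log(n/\delta)}\bigr).
\end{equation*}
The two pieces of the second sum are again controlled by Assumption~\ref{ass:dim}(C) and (D), yielding $n^2 R_+\le nd/C$ and $n^2\sqrt{d\log(n/\delta)}\le nd/\sqrt{C}$, so $\|\w\|_2^2\le C' n d$ for some absolute constant $C'$. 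Dividing, the normalized candidate achieves
\begin{equation*}
\frac{\inp{\w}{\z_i}}{\|\w\|_2} \;\ge\; \frac{d/(2c_1)}{\sqrt{C' n d}} \;=\; c_2\sqrt{d/n},
\end{equation*}
which is the claimed bound. The main obstacle, and the only place the stringent overparameterization hypothesis (D) enters, is controlling the accumulation of $n-1$ off-diagonal correlations $\inp{\z_i}{\z_j}$; this is precisely what forces $d\gtrsim n^2\log(n/\delta)$, since we need the per-term bound $\sqrt{d\log(n/\delta)}$ to beat the diagonal scale $d$ even after multiplication by $n$.
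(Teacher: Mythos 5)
Your construction is sound and does yield linear separability with margin $\Omega(\sqrt{d/n})$, but it takes a genuinely different---and strictly cruder---route than the paper, and the difference has a real cost. The paper's separator is not $\sum_j \z_j$ but the sum of the data restricted to the \emph{core} coordinates, $\wt=\sum_j[\z_{j,c}\,;\,0]$, and, more importantly, it does not bound the off-diagonal contribution term by term. Instead it conditions on $\z_{j,c}$ and exploits that $\sum_{i\neq j}\z_{i,c}$ is a single Gaussian with covariance $(n-1)I_d$, so the fluctuation of $\inp{\z_{j,c}}{\sum_{i\neq j}\z_{i,c}}$ is of order $\sqrt{dn\log(n/\delta)}$ rather than the $n\sqrt{d\log(n/\delta)}$ you get by summing $n-1$ individual bounds from \eqref{eq:z_align}. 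That $\sqrt{n}$ saving is exactly why the paper's proof needs only $d\gtrsim n\|\mub_c\|_2^2\gtrsim n\log(n/\delta)$ (the conditions the lemma statement explicitly lists after ``specifically''), whereas your triangle-inequality accumulation forces you to invoke Assumption \ref{ass:dim}(D), $d\gtrsim n^2\log(n/\delta)$ --- you acknowledge this yourself. Since the full Assumption \ref{ass:dim} does include (D), your argument does prove the lemma as formally quantified; but it proves a weaker result than the paper's, which establishes separability without (D). This matters in context: the paper repeatedly flags (D) as a technical artifact of the gradient-ratio argument (Lemma \ref{lem:keymain}) that it hopes to relax, and the separability/max-margin lemma is deliberately kept free of it. A secondary, cosmetic difference: by projecting onto core coordinates the paper's separator is insensitive to $\mub_s$ altogether, whereas your cross-terms carry means $R_\pm$; this causes no problem since you bound $|R_\pm|\le R_+\le d/(Cn)$, but it is another place where the paper's choice is cleaner. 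If you want to keep your candidate $\sum_j\z_j$ while matching the paper's hypotheses, replace the per-pair bounds with the same conditioning trick applied to $\sum_{j\neq i}\z_j$.
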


\begin{proof}
	Recall the following notation for $i\in[n]$:
	$$
	\z_i:=\begin{bmatrix}
	\z_{i,c} \\
	\z_{i,s}
	\end{bmatrix} := y_i\begin{bmatrix}
	\x_{i,c} \\
	\x_{i,s}
	\end{bmatrix} = \begin{bmatrix}
	\mub_c + \q_{i,c}\\
	(y_ia_i)\,\mub_s + \q_{i,s}\,
	\end{bmatrix} ,
	$$
	where $\q_{i,c}$ and $\q_{i,s}$ have i.i.d entries from standard Gaussian distribution. We will prove that 
	$$
	\wt:= \sum_{i\in[n]}\begin{bmatrix}
	\z_{i,c} \\
	0
	\end{bmatrix}
	$$
	is a linear separator under the lemma's assumptions.
	Indeed, we have for any $j\in[n]$:
	\begin{align}
	\inp{\wt}{\z_j} = \|\z_{j,c}\|_2^2 + \inp{\z_{j,c}}{\sum_{i\neq j}\z_{i,c}}.
	\end{align}
	In the sequel, $c_1>1$ is a constant. First, by applying Lemma \ref{lem:good} to $\z_{j,c}$, with probability at least $1-\delta/2$ it holds simultaneously for all $j\in[n]$ that:
	\begin{subequations}\label{eq:event_sep}
		\begin{align}
		&d/c_1 \leq \|\z_{j,c}\|_2^2 \leq c_1 d,\\
		&\inp{\z_{j,c}}{\mub_c}\leq \|\mub_c\|_2^2 + c_1\|\mub_c\|_2\sqrt{\log(n/\delta)}.
		\end{align}
	\end{subequations}
	Second, since $\z_{i,c},\quad i\in[n]$ are i.i.d, we have that $\sum_{i\neq j}\z_{i,c}\sim\Nn((n-1)\mub_c,\sqrt{n-1}I_d)$ and is independent of $\z_{j,c}.$ Hence, conditioned on $\z_{j,c}$, with $ G_j\sim\Nn(0,1)$:
	$$
	\inp{\z_{j,c}}{\sum_{j\neq i}\z_{i,c}} \stackrel{(P)}{=}(n-1)\inp{\z_{j,c}}{\mub_c} + \| \z_{j,c}\|_2 \sqrt{n-1}\cdot G_j.
	$$
	Thus, in the event \eqref{eq:event_sep}, with probability $1-\delta/2$, simultaneously for all $j\in[n]$:
	\begin{align*}
		\inp{\z_{j,c}}{\sum_{j\neq i}\z_{i,c}}\leq& n\|\mub_c\|_2^2 + c_1\|\mub_c\|_2n\sqrt{\log(n/\delta)}\\
		&+c_1\sqrt{d n}\sqrt{\log(n/\delta)}.
	\end{align*}
	
	Putting things together, with probability at least $1-\delta$, simultaneously for all $j\in[n]$, it holds that
	\begin{align*}
	&\inp{\wt}{\z_j} \geq \frac{d}{c_1} 
	\\ &-\Big(n\|\mub_c\|_2^2 + c_1\|\mub_c\|_2n\sqrt{\log(n/\delta)}++c_1\sqrt{d n}\sqrt{\log(n/\delta)}\Big)\,.
	\end{align*}
	By invoking $\sqrt{\log(n/\delta)}\leq \|\mub_c\|_2/C$ and $d/n\geq C \|\mub_c\|_2^2$ for some constant $C>0$, this shows that for some $c_0>1$:
	\begin{align}\label{eq:inp_bound_sep}
	\min_{i\in[n]} \inp{\wt}{\z_i} \geq d/c_0 >0.
	\end{align}
	Therefore, $\wt$ is a linear separator. 
	
	Next, we upper bound the norm of $\wt$. With probability $1-\delta$:
	\begin{align}
	\|\wt\|_2 = \| n\mub_c + \sum_{i\in[n]}\q_{i,c}\| \leq n\|\mub_c\| + c_1\sqrt{n}\sqrt{d}\,,
	\end{align}
	where the upper bound follows by triangle inequality and using that $\sum_{i\in[n]}\q_{i,c}\sim\Nn(0,\sqrt{n} I_d)$ to invoke Fact \ref{fact:g_norm}. Under the conditions of the lemma, $\|\mub_c\|\sqrt{n}\leq \sqrt{d}/\sqrt{C}$ . Hence,  for some $c_0'\geq 1$:
	\begin{align}\label{eq:norm_bound_sep}
	\|\wt\|_2  \leq c_0' \sqrt{n}\sqrt{d}.
	\end{align}
	
	Putting together \eqref{eq:inp_bound_sep} and \eqref{eq:norm_bound_sep}, we conclude with the second statement of the lemma, i.e.
	$$
	\max_{\|\w\|_2=1}\min_{i\in[n]} \inp{\w}{\z_i}  \geq \frac{\min_{i\in[n]}\inp{\wt}{\z_j}}{\|\wt\|_2} \geq c_2 \sqrt{\frac{d}{n}}\,.
	$$
	
\end{proof}

The final lemma of this section describes some important properties of the GD algorithm for a linear model and separable data.
\begin{lemma} [Properties of GD \cite{ji2020gradient}]\label{lem:GDprop}
	Consider loss function $
	\Lc_n(\w) := \frac{1}{n}\sum_{i\in[n]} \ell(\z_i^T\w),
	$ for a convex differentiable strictly decreasing $\ell(.)$ ,
	and GD steps $\{\w^t \}_{t=0}^{\infty}$. Provided that there exists small enough $\eta>0$ that,
	\begin{align}\label{eq:step}
		\Lc_n(\w^{t+1}) - \Lc_n(\w^t) \leq -\frac{\eta}{2}\|\nabla\Lc_n(\w^t)\|_2^2,
	\end{align}
	and training set is linearly separable, $\lim_{t\to\infty} \|\w^t\|_2 = \infty$.
%
\end{lemma}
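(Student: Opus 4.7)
The plan is to argue by contradiction: if $\|\w^t\|_2$ stayed bounded along some subsequence, a cluster point would be a finite global minimizer of $\Lc_n$, but under data separability $\Lc_n$ admits no finite minimizer.

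First I would exploit the descent inequality \eqref{eq:step} together with the fact that the losses of interest (exponential, logistic, VS) are bounded below to conclude that $\{\Lc_n(\w^t)\}$ is non-increasing and converges to some limit $L^\star$. Telescoping \eqref{eq:step} then yields $\sum_{t=0}^{\infty}\|\nabla\Lc_n(\w^t)\|_2^2 \leq \tfrac{2}{\eta}(\Lc_n(\w^0)-L^\star) < \infty$, from which $\|\nabla\Lc_n(\w^t)\|_2\to 0$ along the full sequence.

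Second, I would show $\Lc_n$ has no finite minimizer. Linear separability of the training set --- which Lemma \ref{lem:maxmarg} already supplies in the GMM setting --- gives some $\w^\star$ and $\gamma>0$ with $\inp{\w^\star}{\z_i}\geq\gamma$ for all $i\in[n]$. Strict decreasingness of $\ell$ then forces $\Lc_n(\alpha\w^\star)\searrow \inf_{z\in\R}\ell(z)$ as $\alpha\to\infty$, and the same strict decreasingness rules out attainment of this infimum at any finite input. Hence no finite $\w$ minimizes $\Lc_n$.

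To close the argument, suppose for contradiction $\liminf_t\|\w^t\|_2<\infty$. Extract a bounded subsequence, then a convergent sub-subsequence $\w^{t_k}\to\w^\infty$ via Bolzano--Weierstrass. Continuity of $\nabla\Lc_n$ combined with $\|\nabla\Lc_n(\w^t)\|_2\to 0$ gives $\nabla\Lc_n(\w^\infty)=0$; convexity of $\Lc_n$ upgrades this to $\w^\infty$ being a global minimizer, contradicting the previous paragraph. The main (mild) obstacle is the descent-series step, which relies on $\ell$ being bounded below so that \eqref{eq:step} telescopes to a finite quantity; outside this regime a finer step-size/potential-function argument would be needed, but that does not arise for the VS-type losses treated in the paper.
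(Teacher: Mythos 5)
Your argument is correct for the losses the paper actually uses, but note that the paper itself does not prove this lemma at all: it is imported wholesale from \cite{ji2020gradient}, and the only discussion in the text is a remark that the exponential/logistic VS-losses satisfy the step-size condition \eqref{eq:step}. So your self-contained proof is a genuine addition rather than a rederivation. Your route (telescoping \eqref{eq:step} to get $\|\nabla\Lc_n(\w^t)\|_2\to 0$, then Bolzano--Weierstrass plus convexity to turn a bounded subsequence into a finite global minimizer, contradicting non-attainment of $\inf_\w \Lc_n = \inf_z\ell(z)$ under separability) is sound; the standard argument behind the cited result instead shows $\Lc_n(\w^t)$ converges to its unattained infimum and observes that on any bounded set the risk is bounded away from that infimum, which forces the iterates out of every ball without invoking subsequential limits. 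The two are comparable in length; yours is more elementary, the cited one generalizes more easily to non-convex settings where stationarity does not imply global optimality. One small improvement: the bounded-below caveat you flag at the end is unnecessary. Inside your contradiction, a convergent sub-subsequence $\w^{t_k}\to\w^\infty$ already forces $\Lc_n(\w^{t_k})\to\Lc_n(\w^\infty)>-\infty$, and since $\{\Lc_n(\w^t)\}$ is non-increasing by \eqref{eq:step}, the whole sequence converges to this finite value; the telescoped sum is then finite with no lower-boundedness assumption on $\ell$. With that rearrangement (place the compactness extraction before the summability step), your proof covers the full generality of the lemma as stated.
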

Smooth (e.g. logistic loss) and locally-smooth functions (e.g. exponential loss) satisfy the step-size condition \eqref{eq:step}\cite{ji2020gradient}. Thus, (logistic / exponential) VS-loss meets the requirements of this lemma. Moreover, from Lemma \ref{lem:maxmarg}, we know that in the high dimensional setting of Assumption \ref{ass:dim}, the data is linearly separable with high probability. Consequently, under good event, we can apply Lemma \ref{lem:GDprop} to the sequence $\{\w^t \}_{t=0}^{\infty}$ defined by \eqref{eq:GD} and we have,
\begin{enumerate}
	\item $\{\Lc_n(\w^t) \}_{t=0}^{\infty}$ is non-increasing.
	\item $\lim_{t\to\infty} \|\w^t\|_2 = \infty$.
\end{enumerate}

\subsection{Key lemmas}


In this section, we present the lemmas that are the key ingredients to the proof of theorems. Lemmas \ref{lem:normbound} and \ref{lem:corrbound} provide bounds on $\|\w^t\|_2$ and $\rho_{t,b}$, that are used to derive inequalities similar to \eqref{eq:bound1}. Lemmas \ref{lem:key}, \ref{lem:ratio2} and \ref{lem:comparison} specify how, by properly tuning the hyperparameters, we can gain control over the sample gradients. At the end of this section, we have all the necessary tools to prove the theorems.
\begin{lemma}[Norm bounds]\label{lem:normbound}
	Under Assumption \ref{ass:dim} and the realization of a good event $\Ec$, for all GD iterations $t>0$ found by \eqref{eq:GD},
	\begin{align}
		\|{\w^{t+1}}\|_2&\leq \|{\w^{0}}\|_2 + \eta \, c_1\,\sqrt{d}\,\Lpalls\,. \label{eq:normUB}
	\end{align}
	Furthermore, when using LA-loss ($\Delta_\pm=1$), there exists $t_0>0$ such that for $t>t_0$, 
	\begin{align}
		\|\w^{t+1}\|_2 &\geq \|\w^{t_0}\|_2 + \frac{\eta c_2}{4} \, \sqrt{\frac{d}{n}} L^\prime_{t_0:t}\,. \label{eq:normLB}
	\end{align}

\end{lemma}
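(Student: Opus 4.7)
\textbf{Proof proposal for Lemma \ref{lem:normbound}.} The two bounds are handled independently and use very different tools: the upper bound is a direct consequence of the triangle inequality applied to the GD recursion, while the lower bound exploits the implicit bias of GD for the LA-loss together with the max-margin guarantee of Lemma \ref{lem:maxmarg}.

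For the upper bound \eqref{eq:normUB}, the plan is to unfold the GD update. From \eqref{eq:GD}, $\w^{\tau+1} = \w^\tau + \eta \sum_{i\in[n]} \ellp_{i,\tau}\,\z_i$, and the triangle inequality gives $\|\w^{\tau+1}\|_2 \leq \|\w^\tau\|_2 + \eta \sum_i \ellp_{i,\tau}\|\z_i\|_2$. Under the good event $\Ec$, Lemma \ref{lem:good} guarantees $\|\z_i\|_2 \leq \sqrt{c_1 d}\leq c_1\sqrt{d}$ for every $i\in[n]$ (using $c_1\geq 1$), so that $\|\w^{\tau+1}\|_2 \leq \|\w^\tau\|_2 + \eta c_1\sqrt{d}\,\Lptaus$. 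Telescoping this one-step bound from $\tau=0$ to $\tau=t$ yields $\|\w^{t+1}\|_2 \leq \|\w^0\|_2 + \eta c_1\sqrt{d}\,\Lpalls$, which is \eqref{eq:normUB}.

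For the lower bound \eqref{eq:normLB} (specific to the LA-loss, i.e.\ $\Delta_\pm=1$), the key inequality is the elementary one-step norm-growth bound $\|\x+\y\|_2 \geq \|\x\|_2 + \inp{\x/\|\x\|_2}{\y}$, applied to the GD update. Writing $\hat\w^\tau := \w^\tau/\|\w^\tau\|_2$, this gives $\|\w^{\tau+1}\|_2 \geq \|\w^\tau\|_2 + \eta \sum_i \ellp_{i,\tau}\inp{\hat\w^\tau}{\z_i}$. To turn this into the claim I invoke two ingredients: (i) Lemma \ref{lem:maxmarg} supplies a unit vector $\w^*$ (the normalized hard-margin SVM direction) satisfying $\inp{\w^*}{\z_i} \geq c_2\sqrt{d/n}$ for all $i\in[n]$; (ii) since the LA-loss is the $\Delta=1$ instance of the VS-loss, and the data is linearly separable on $\Ec$, the implicit-bias results of \cite{soudry2018implicit,VSloss} guarantee $\hat\w^\tau \to \w^*$ in direction as $\tau\to\infty$. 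Combining these with Cauchy--Schwarz and the good-event bound $\|\z_i\|_2 \leq c_1\sqrt{d}$ gives, whenever $\|\hat\w^\tau-\w^*\|_2\leq \epsilon$, the estimate $\inp{\hat\w^\tau}{\z_i} \geq \inp{\w^*}{\z_i} - \epsilon\|\z_i\|_2 \geq c_2\sqrt{d/n} - \epsilon c_1\sqrt{d}$. Picking $\epsilon \leq \tfrac{3c_2}{4c_1\sqrt{n}}$ makes the RHS at least $\tfrac{c_2}{4}\sqrt{d/n}$; by the implicit-bias convergence, there exists a finite $t_0$ such that this $\epsilon$-alignment holds for all $\tau\geq t_0$. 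Summing the resulting per-step lower bound $\|\w^{\tau+1}\|_2-\|\w^\tau\|_2 \geq \tfrac{\eta c_2}{4}\sqrt{d/n}\,\Lptaus$ from $\tau=t_0$ up to $\tau=t$ yields \eqref{eq:normLB}.

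The main obstacle is the clean quantification of $t_0$: the implicit-bias statement I use is stated as a limit, so turning ``$\hat\w^\tau$ eventually lies within $\epsilon$ of $\w^*$'' into a concrete iteration index requires either (a) importing a quantitative rate (e.g.\ the $O(\log t/\log\log t)$-type bounds in \cite{soudry2018implicit}), or (b) simply asserting the qualitative existence of such $t_0$, which is what the downstream use of \eqref{eq:normLB} in the lower-bound proof of Theorem \ref{thm:LALoss} actually needs. A secondary subtlety is that Lemma \ref{lem:maxmarg} only guarantees $\w^*$ on the event $\Ec$; since we are already conditioning on $\Ec$, no additional failure probability accumulates. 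Note finally that this argument uses the LA-loss assumption $\Delta_\pm=1$ only indirectly (so that the limiting direction is the standard max-margin rather than the cost-sensitive SVM of \eqref{eq:cs-svm}); an analogous lower bound for the VS-loss would instead use $\Delta_{b_i}\inp{\w^*_{\text{CS-SVM}}}{\z_i} \geq 1/\|\w_{\text{CS-SVM}}\|_2$, which the paper does not need for its downstream arguments.
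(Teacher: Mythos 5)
Your upper-bound argument is identical to the paper's: triangle inequality on the GD recursion, the good-event bound $\|\z_i\|_2\leq c_1\sqrt d$, and telescoping. For the lower bound both you and the paper rely on the same two external ingredients --- the max-margin guarantee of Lemma \ref{lem:maxmarg} and the qualitative implicit-bias convergence of GD on the LA-loss to the max-margin direction, which yields that $\inp{\z_i}{\w^t/\|\w^t\|_2}$ is eventually at least a constant fraction of $c_2\sqrt{d/n}$ (the paper simply asserts this with factor $1/2$ in \eqref{eq:GDmargin}; your explicit $\epsilon$-alignment derivation of the factor is, if anything, more careful). Where you genuinely diverge is the one-step norm recursion: the paper squares the norm, obtains $\|\w^{t+1}\|_2^2\geq\|\w^t\|_2^2+2\eta\sum_i\ellp_{i,t}\inp{\z_i}{\w^t}$, and then needs the auxiliary estimate \eqref{eq:w2} ($\|\w^t\|_2\geq\tfrac{\eta c_2}{8}\sqrt{d/n}\,L^\prime_t$ for large $t$, itself using the monotonicity of the loss and unboundedness of $\|\w^t\|_2$ from Lemma \ref{lem:GDprop}) in order to complete the square and convert the quadratic recursion into a linear, telescopable one. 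You instead use the projection inequality $\|\x+\y\|_2\geq\|\x\|_2+\inp{\x}{\y}/\|\x\|_2$, which gives a linear per-step increment directly and telescopes immediately, bypassing \eqref{eq:w2} entirely while landing on the same constant $c_2/4$. Your route is more economical; the paper's buys nothing extra here beyond keeping the argument phrased purely in terms of the current iterate's directional margin. Your closing remarks on the non-quantitative nature of $t_0$ and on the $\Delta_\pm=1$ assumption correctly identify the same limitations that are implicit in the paper's proof, so there is no gap.
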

\begin{proof}
	By applying GD iterate (\ref{eq:GD}) together with triangle inequality and (\ref{eq:z_norm_ub}), we have,
	\begin{align*}
		\|\w^{t+1}\|_2&=\|\w^{t}+\eta\sum_{i\in[n]}\ellp_{i,t}\z_i\|_2\\
		&\leq\|\w^{t}\|_2+\eta\sum_{i\in[n]}\ellp_{i,t}{\|\z_i\|_2}
		\\&\leq\|\w^{t}\|_2+\eta c_1 \,\sqrt{d} \sum_{i\in[n]}\ellp_{i,t}\,,
	\end{align*}
	where we also use the fact that $\ellp_{i,t}\geq 0$. Equation \eqref{eq:normUB} follows by repeated application of this inequality.
	
	For the second statement, we start again by GD iterate (\ref{eq:GD}),
	\begin{align}
		\|\w^{t+1}\|_2^2 &= \|\w^{t} + \eta\sum_{i\in[n]} \ellp_{i,t} \z_i\|_2^2 \nn \\
		&\geq \|\w^{t}\|_2^2  + 2\eta\sum_{i\in[n]} \ellp_{i,t} \inp{\z_i}{\w^t}\,. \label{eq:norm}
	\end{align}
	Lemma \ref{lem:maxmarg} ensures that our data is linearly separable and the maximum margin is larger than $c_2\sqrt{d/n}$. On the other hand, GD iterates on the LA-loss converge to the SVM (max margin) solution when the data is separable \cite{VSloss}. Thus, for all large enough $t$,
	\begin{align}
		\inp{\z_i}{\frac{\w^t}{\|\w^t\|_2}} \geq \frac{c_2}{2} \sqrt{\frac{d}{n}}\,. \label{eq:GDmargin}
	\end{align}
	Combining equations \eqref{eq:norm} and \eqref{eq:GDmargin},
	\begin{align} \label{eq:w1}
	\|\w^{t+1}\|^2_2 &\geq \|\w^{t}\|^2_2 + \eta \, c_2 \, \sqrt{\frac{d}{n}} \|\w^t\|_2 L^\prime_t\,\\
	&=\|\w^{t}\|^2_2\Big(1 + \eta \, c_2 \, \sqrt{\frac{d}{n}} \frac{L^\prime_t}{\|\w^{t}\|_2}\Big)\,.
	\end{align}
	From Lemma \ref{lem:GDprop}, we know that the sequence of loss values at $\{\w^t\}_{t=0}^{\infty}$ is non-increasing. Thus,
	\begin{align*}
		\Lpts &= \sum_{i\in[n]} \Delta_{b_i} \ell_{i,t} \leq \Delta_{\max} \sum_{i\in[n]} \ell_{i,t} \leq \Delta_{\max} \sum_{i\in[n]} \ell_{i,0}\,.
	\end{align*}
	where $\Delta_{\max} := \max_{b \in \{\pm 1\}} \Delta_b$. In addition, Lemma \ref{lem:GDprop} ensures $\{\|\w^t\|_2\}_{t=0}^{\infty}$ is unbounded. Hence, when $t$ is large enough,
	\begin{align} \label{eq:w2}
	\|\w^t\|_2 \geq \frac{\eta \, c_2\,\Delta_{\max}}{8} \, \sqrt{\frac{d}{n}} \sum_{i\in[n]} \ell_{i,0} \geq \frac{\eta \, c_2}{8} \, \sqrt{\frac{d}{n}}L^\prime_t\,.
	\end{align}
	We can choose a large enough $t_0$ such that any $t>t_0$ satisfies both \eqref{eq:GDmargin} and \eqref{eq:w2}. Then, by \eqref{eq:w2} and \eqref{eq:w1} we have,
	\begin{align*}
	\forall t\geq t_0:\quad \|\w^{t+1}\|^2_2 &\geq \|\w^{t}\|^2_2\Big(1 + \frac{\eta \, c_2}{4} \, \sqrt{\frac{d}{n}} \frac{L^\prime_t}{\|\w^{t}\|_2}\Big)^2\\
	&=\Big(\|\w^{t}\|_2 + \frac{\eta c_2}{4} \, \sqrt{\frac{d}{n}} L^\prime_t\Big)^2\,.
	\end{align*}
%
	Finally, \eqref{eq:normLB} follows by applying this inequality iteratively for all $t_0\leq\tau\leq t$.
	
\end{proof}



\begin{lemma}[Correlation bounds]\label{lem:corrbound}
	Under Assumption \ref{ass:dim} and conditioned on the good event $\Ec$, for all iterations $t > t_1 \geq 0$ and $b\in\{\pm1\}$, the correlation of GD iterations and group means is bounded as follows,
	\begin{align}
		\rho_{t+1,b} &\geq \rho_{t_1,b}+ {\eta\, R_+}\, \left( \frac{1}{2} L^\prime_{t_1:t,b} + 
		\Big(\frac{R_-}{R_+} -  \frac{c_1}{C}\Big)L^\prime_{t_1:t,-b}\right) \label{eq:corrLB}\,,\\
		\rho_{t+1,b} &\leq \rho_{t_1,b} \nn \\
		& +\eta R_+\left(\frac{3}{2}L^\prime_{t_1:t,b} + \Big(\frac{R_-}{R_+}+c_1\sqrt{\frac{\log(n/\delta)}{R_+}}\Big)L^\prime_{t_1:t,-b}\right). \label{eq:corrUB}
	\end{align}
	
%

\end{lemma}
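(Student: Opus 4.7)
The plan is to derive both bounds directly from a one-step recursion for $\rho_{t+1,b}-\rho_{t,b}$, and then telescope the inequality from $\tau=t_1$ up to $\tau=t$. No implicit-bias argument or separability property is needed here; everything follows from the GD update rule and the geometric bounds on $\langle \z_i,\nub_b\rangle$ that are already supplied by the good event $\Ec$.

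First I would take the inner product of the gradient update \eqref{eq:GD} with $\nub_b$ to obtain
\begin{align*}
\rho_{t+1,b}-\rho_{t,b}
=\eta\sum_{i:b_i=b}\ellp_{i,t}\,\langle\z_i,\nub_b\rangle+\eta\sum_{i:b_i=-b}\ellp_{i,t}\,\langle\z_i,\nub_b\rangle.
\end{align*}
The two sums correspond exactly to the ``same-group'' and ``opposite-group'' contributions that appear in the statement, and the only inputs needed to bound them are \eqref{eq:zmu_same} and \eqref{eq:zmu_nsame_sharp} from Lemma \ref{lem:good}. Since all $\ellp_{i,t}$ are nonnegative (they are $\Delta_{b_i}\omega_{b_i}$ times an exponential), I can substitute the per-sample bounds term-by-term: for $i\in\Tc_b$ we have $R_+/2\leq\langle\z_i,\nub_b\rangle\leq 3R_+/2$, and for $i\in\Tc_{-b}$ we have $R_-\pm c_1\sqrt{R_+\log(n/\delta)}$ as a two-sided envelope.

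Plugging these in gives the one-step inequalities
\begin{align*}
\rho_{t+1,b}-\rho_{t,b}&\geq \eta R_+\Bigl(\tfrac{1}{2}\Lpt+\bigl(\tfrac{R_-}{R_+}-c_1\sqrt{\tfrac{\log(n/\delta)}{R_+}}\bigr)\Lptneg\Bigr),\\
\rho_{t+1,b}-\rho_{t,b}&\leq \eta R_+\Bigl(\tfrac{3}{2}\Lpt+\bigl(\tfrac{R_-}{R_+}+c_1\sqrt{\tfrac{\log(n/\delta)}{R_+}}\bigr)\Lptneg\Bigr).
\end{align*}
For the lower bound, I would then absorb the $\sqrt{\log(n/\delta)/R_+}$ factor using Assumption \ref{ass:dim}(B), namely $R_+\geq\|\mub_c\|_2^2\geq C\log(n/\delta)$, which implies $c_1\sqrt{\log(n/\delta)/R_+}\leq c_1/\sqrt{C}\leq c_1/C$ after possibly enlarging $C$ (or renaming the absolute constant). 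This is the only nontrivial step and it is where Assumption \ref{ass:dim}(B) enters; in the upper bound the analogous term is simply kept explicit, which is why the two statements are asymmetric in form.

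Finally, summing both per-step inequalities for $\tau=t_1,t_1+1,\dots,t$ telescopes the left-hand side to $\rho_{t+1,b}-\rho_{t_1,b}$ and converts the per-step sums $\Lptau$ and $\Lptau[-b]$ into the cumulative quantities $L^\prime_{t_1:t,b}$ and $L^\prime_{t_1:t,-b}$, yielding exactly \eqref{eq:corrLB} and \eqref{eq:corrUB}. The main ``obstacle'' is purely cosmetic: making sure the constants $c_1$ and $C$ from Lemma \ref{lem:good} and Assumption \ref{ass:dim} are reconciled so that the lower bound takes the clean form $R_-/R_+-c_1/C$; since Assumption \ref{ass:dim} permits $C$ to be any sufficiently large constant, this is resolved by a single choice of constants at the start of the proof.
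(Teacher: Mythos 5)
Your proposal is correct and follows essentially the same route as the paper: take the inner product of the GD update with $\nub_b$, bound the same-group and opposite-group terms using \eqref{eq:zmu_same} and \eqref{eq:zmu_nsame_sharp} together with nonnegativity of the $\ellp_{i,t}$, absorb the $c_1\sqrt{\log(n/\delta)/R_+}$ term into $c_1/C$ via Assumption \ref{ass:dim}(B) only in the lower bound, and telescope from $t_1$ to $t$. The only quibble is the inequality $c_1/\sqrt{C}\leq c_1/C$, which is backwards for $C>1$, but as you note this is resolved by reapplying the assumption with a larger constant, exactly the constant-renaming the paper itself relies on.
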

\begin{proof}
	From GD iteration \eqref{eq:GD} together with \eqref{eq:zmu_same} and \eqref{eq:zmu_nsame_sharp}:
	
	\begin{align*}
		\rho_{t+1,b} &= \rho_{t,b}+\eta\sum_{i: b_i=b}\ellp_{i,t}\inp{\z_i}{\nub_b} + 
		\eta \sum_{i: b_i=-b}\ellp_{i,t}\inp{\z_i}{\nub_{b}} \\
		&\geq \rho_{t,b} +\eta\sum_{i: b_i=b}\ellp_{i,t} \frac{R_+}{2} \\ 
		& \qquad \quad + \eta \sum_{i: b_i=-b}\ellp_{i,t} \big(R_--c_1\sqrt{R_+}\sqrt{\log(n/\delta)} \big)\\
		&\geq \rho_{t,b} +\eta R_+\left(\frac{1}{2}\Lpt + \Big(\frac{R_-}{R_+}-\frac{c_1}{C}\Big)\Lptneg\right).
	\end{align*}
	
	In the last line above, we recalled the assumption $\sqrt{R_+}\geq C \sqrt{\log(n/\delta)}.$
	The first inequality of the lemma follows from repeated application of the inequality in the above display for $\tau=t_1,\ldots,t+1.$ 
	
	The second inequality can be derived similarly by invoking the following inequality.
	\begin{align*}
	\rho_{t+1,b} &= \rho_{t,b}+\eta\sum_{i: b_i=b}\ellp_{i,t}\inp{\z_i}{\nub_b} + 
	\eta \sum_{i: b_i=-b}\ellp_{i,t}\inp{\z_i}{\nub_{b}} \\
	&\leq \rho_{t,b}+\eta\sum_{i: b_i=b}\ellp_{i,t} \frac{3R_+}{2} \\ 
	& \qquad \quad + \eta \sum_{i: b_i=-b}\ellp_{i,t} \big(R_-+c_1\sqrt{R_+}\sqrt{\log(n/\delta)} \big)\,.
	\end{align*}
\end{proof}
\begin{lemma}[Gradients control:VS-loss]\label{lem:key}
	In the good event $\Ec$, for all small step-sizes and $\forall t\geq 0$, the following statement holds for the VS-Loss. If the hyperparameters are all tuned as in the statement of Theorem \ref{thm:main}, then 
	\begin{align}\label{eq:ellp_ratio}
	\max_{i,j\in[n]}\Big\{\frac{\ellp_{i,t}}{\ellp_{j,t}}\Big\}\leq 4c_1^2 \frac{\Delta_{b_j}}{\Delta_{b_i}}.
	\end{align}
\end{lemma}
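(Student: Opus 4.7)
I begin with a change of variables that absorbs the hyperparameters into clean quantities. Set $m_{i,t} := \Delta_{b_i}\z_i^T\w^t$ and $\Psi_{i,t} := \Delta_{b_i}\ellp_{i,t} = \Delta_{b_i}^2\,\omega_{b_i}\,e^{\iota_{b_i}}\,e^{-m_{i,t}}$. The tuning $\omega_+ e^{\iota_+}/(\omega_- e^{\iota_-}) = \Delta_-^2/\Delta_+^2$ is precisely what makes the prefactor $\Delta_{b_i}^2\omega_{b_i}e^{\iota_{b_i}}$ a common constant $K>0$ independent of $b_i$, so $\Psi_{i,t} = K e^{-m_{i,t}}$. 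Consequently, the target inequality $\ellp_{i,t}/\ellp_{j,t}\leq 4c_1^2\,\Delta_{b_j}/\Delta_{b_i}$ is equivalent to $\Psi_{i,t}/\Psi_{j,t}\leq 4c_1^2$, i.e., to a uniform range bound $\max_i m_{i,t} - \min_i m_{i,t} \leq \log(4c_1^2)$ on the ``scaled margins''. This reformulation makes transparent why the theorem's tuning condition on the $\iota_\pm,\omega_\pm$ is needed: without it, there would be a group-dependent shift between $m_{i,t}$ and $\log\Psi_{i,t}$ that breaks the equivalence.

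\textbf{Induction on $t$.} At $t=0$, zero initialization gives $m_{i,0}=0$ for every $i$, and the bound holds trivially. For the inductive step, the GD recursion \eqref{eq:GD} yields
\begin{align*}
m_{i,t+1}-m_{i,t} \;=\; \eta\,\Psi_{i,t}\,\|\z_i\|_2^2 \;+\; \eta\sum_{k\neq i}\frac{\Delta_{b_i}}{\Delta_{b_k}}\,\Psi_{k,t}\,\z_i^T\z_k.
\end{align*}
The ``diagonal'' $(k=i)$ term lies in $[\eta d\,\Psi_{i,t}/c_1,\,\eta d c_1\Psi_{i,t}]$ under the good event $\Ec$ and embodies the self-correcting mechanism: a larger $m_{i,t}$ forces $\Psi_{i,t}$, and hence its own increment, to be smaller; a smaller $m_{i,t}$ yields a larger increment, pulling $m_{i,t}$ back up. For the off-diagonal sum, the good-event bound $|\z_i^T\z_k|\leq R_+ + c_1\sqrt{d\log(n/\delta)}$ together with Assumption \ref{ass:dim}(C) and (D) produces $|\z_i^T\z_k|\leq c_0\,d/(n\sqrt{C})$ for some absolute constant $c_0$. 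The tuning range $\Delta_\pm\geq n_\pm/(2n)$ gives $\sum_k 1/\Delta_{b_k}\leq 4n$, and the induction hypothesis $\Psi_{k,t}\leq 4c_1^2\Psi_{i,t}$ then bounds the off-diagonal contribution in absolute value by $O(\eta d\,\Psi_{i,t}/\sqrt{C})$. Choosing the constant $C$ in Assumption \ref{ass:dim}(D) large enough (this is where the $d\geq Cn^2\log(n/\delta)$ requirement is consumed), this is a strict fraction of the diagonal lower bound $\eta d\,\Psi_{i,t}/c_1$.

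\textbf{Closing the induction (the main obstacle).} The delicate step is converting ``diagonal dominates off-diagonal'' into a per-pair statement for the gap. For any $i,j$, the difference of increments $(m_{i,t+1}-m_{i,t}) - (m_{j,t+1}-m_{j,t})$ has dominant contribution $\eta\bigl(\Psi_{i,t}\|\z_i\|_2^2 - \Psi_{j,t}\|\z_j\|_2^2\bigr)$. Near the boundary $m_{j,t}-m_{i,t}\approx\log(4c_1^2)$, one has $\Psi_{i,t}\approx 4c_1^2\Psi_{j,t}$, so this diagonal difference is $\Omega(\eta d\,\Psi_{i,t})$ with the correct sign to shrink the gap, strictly dominating the $O(\eta d\,\Psi_{i,t}/\sqrt{C})$ off-diagonals. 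A sufficiently small step size $\eta$ prevents overshoot within a single update, so the max-min range of the $m$'s at time $t+1$ remains at most $\log(4c_1^2)$, closing the induction and yielding the claim. The hard part of the argument is precisely the quantitative bookkeeping that shows the ``boundary'' case strictly forces the gap to shrink rather than merely not grow, and that ensures the per-iterate bound propagates uniformly in $t$ without any cumulative drift.
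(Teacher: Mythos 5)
Your proposal is correct and takes essentially the same route as the paper's proof: your range bound $\max_i m_{i,t}-\min_i m_{i,t}\le\log(4c_1^2)$ on the scaled margins is, via the common-prefactor observation, exactly the paper's induction on the pairwise ratio $A_t=\ellp_{1,t}/\ellp_{2,t}$ (the paper tracks $\log A_t = m_{2,t}-m_{1,t}+\log(\Delta_{b_2}/\Delta_{b_1})$). The diagonal self-correcting term versus off-diagonal cross-terms decomposition, the use of Assumption~\ref{ass:dim}(C--D) to make $Un/d$ small, the tuning \eqref{eq:Delta} for the base case and the $\sum_k 1/\Delta_{b_k}$ bound, and the two-case closing (at the boundary the gap shrinks; otherwise a sufficiently small step size caps the one-step growth, uniformly in $t$ because the total loss is non-increasing) all mirror the paper's argument step for step.
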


\begin{proof}
	The conclusion of the lemma is obvious when $i=j$ since $c_1\geq 1$. For convenience let us fix pair of samples $(\x_1,a_1,y_1)$, $(\x_2,a_2,y_2)$ and call  
	$
	A_t := \frac{\ellp_{1,t}}{\ellp_{2,t}}. 
	$
	The same argument holds for any other pair of samples, thus the above restriction is without loss of generality.  
	The proof is based on an induction argument. The base case for $t=0$ and $\w_0=0$, gives: $$A_0= \big({\omega_{b_1}\Delta_{b_1}e^{\iota_{b_1}}}\big)/\big({\omega_{b_2}\Delta_{b_2}e^{\iota_{b_2}}}\big). 
	$$
	Choosing $\omega_{b_1}=\omega_{b_2}=1$ and $\iota_{b_j}= \log(\Delta_{b_j}^{-2}), j=1,2$  yields $A_0\leq \Delta_{b_2}/\Delta_{b_1}\leq 4c_1^2\Delta_{b_2}/\Delta_{b_1},$ since $c_1>1.$

	Next, assume that the statement of the lemma is true for iteration $t$. 
	Our starting point to prove the statement for $t+1$ is the following recursion, obtained directly from the GD update rule:
	\begin{align}
	&\log \frac{A_{t+1}}{A_t} =  -\eta\Delta_{b_1}\|\z_1\|_2^2\ellp_{1,t}+\eta\Delta_{b_2}\|\z_2\|_2^2\ellp_{2,t} \nn\\
	& -\eta\Delta_{b_1}\sum_{1\neq i\in[n]}\ellp_{i,t}\inp{\z_i}{\z_1}+
	\eta\Delta_{b_2}\sum_{2\neq i\in[n]}\ellp_{i,t}\inp{\z_i}{\z_2} \nn
	\\
	&\leq -\eta\Delta_{b_2}\ellp_{2,t} \|\z_1\|_2^2\,\Big(\frac{\ellp_{1,t}}{\ellp_{2,t}} \frac{\Delta_{b_1}}{\Delta_{b_2}} - \frac{\|\z_2\|_2^2}{\|\z_1\|_2^2}\Big) \nn
	\\& + \eta\Delta_{b_2}\ellp_{2,t} \big(\frac{\Delta_{b_1}}{\Delta_{b_2}}S_{1,+}+\frac{\Delta_{b_1}}{\Delta_{b_2}}S_{1,-}+S_{2,+}+S_{2,-}\big), \label{eq:recursion}
	\end{align}
	where we set for  $j=1,2$, 
	$
	S_{j,\pm} = \sum_{i\neq j:b_i=\pm b_j}\frac{\ellp_{i,t}}{\ellp_{2,t}}|\inp{\z_i}{\z_j}|.
	$
	Now, note from applying \eqref{eq:z_align} that 
	\begin{align}
		\max_{i\neq j:b_i=\pm b_j} |\inp{\z_i}{\z_j}| \leq U, \label{eq:maxcorr}
	\end{align}
	where, we set $U:=R_++c_1\sqrt{d}\sqrt{\log(n/\delta)}$ and we used that $R_+\geq R_-.$  Moreover, using the induction hypothesis for iteration $t$:
	\begin{align}
	\sum_{i\neq j:b_i=\pm b_j}\frac{\ellp_{i,t}}{\ellp_{2,t}} \leq  n_{\pm b_j}  \cdot \max_{i\in \Tc_{\pm b_j}}\frac{\ellp_{i,t}}{\ellp_{2,t}} \nn
	&\leq n_{\pm b_j}  \cdot 4c_1^2 \, \frac{\Delta_{b_2}}{\Delta_{\pm b_j}}. \nn
	\end{align}
	
	Combining, the above three displays, we have shown that
	\begin{align}
	&\frac{\Delta_{b_1}}{\Delta_{b_2}}S_{1,+}+\frac{\Delta_{b_1}}{\Delta_{b_2}}S_{1,-}+S_{2,+}+S_{2,-} 
	\\
	\nn&\leq  4c_1^2\,U\big( n_{b_1} +  n_{-b_1} \frac{\Delta_{b_1}}{\Delta_{-b_1}} +  n_{b_2} +  n_{-b_2} \frac{\Delta_{b_2}}{\Delta_{-b_2}}\big) 
	\\ 
	&\leq 4c_1^2\,  U\big(n_{b_1} + 4 n_{b_1} + n_{b_2} + 4 n_{b_2} \big) 
	\leq 40c_1^2\,U\,n. \label{eq:2nd_bracket}
	\end{align}
	To obtain the penultimate inequality, we used the tuning of $\Delta$'s in \eqref{eq:Delta}. 
	
	Now, using \eqref{eq:2nd_bracket} and the fact that $d/c_1\leq \|\z_1\|_2^2\leq c_1d$, we conclude with the following bound:
	\begin{align}
	&\log\frac{A_{t+1}}{A_{t}} \leq 
	-\frac{\eta}{c_1}\Delta_{b_2}\ellp_{2,t}\, d\,\Big(
	\frac{\Delta_{b_1}\ellp_{1,t}}{\Delta_{b_2}\ellp_{2,t}} - c_1^2-40c_1^3\,\frac{U\,n}{d}
	\Big)\,.\label{eq:log_1}
	\end{align}
	At this point, we invoke Assumption \ref{ass:dim}(C-D) with a sufficiently large $C_0$. Then, for all $C>C_0$, it holds that $ Un/d \leq 1/(40 \, c_1).$ Therefore, from \eqref{eq:log_1}:
	\begin{align}
	&\log\frac{A_{t+1}}{A_{t}} \leq 
	-\frac{\eta}{c_1} \Delta_{b_2}\ellp_{2,t}\, d\,\Big(
	\frac{\Delta_{b_1}\ellp_{1,t}}{\Delta_{b_2}\ellp_{2,t}} -2c_1^2
	\Big)\label{eq:log_2}.
	\end{align}
	Finally, we distinguish two cases as follows. 
	
	If $A_t\geq 2 c_1^2\Delta_{b_2}/\Delta_{b_1}$, then the parenthesis in the RHS of \eqref{eq:log_2} is positive. Hence, in this case $A_{t+1}\leq A_t\leq 4c_1^2\Delta_{b_2}/\Delta_{b_1}$ as desired.
	
	If the above does not hold, then $A_t< 2 c_1^2\Delta_{b_2}/\Delta_{b_1}$.  In this case \eqref{eq:log_2} can be further upper bounded as follows:
	\begin{align}
	&\log\frac{A_{t+1}}{A_{t}} \leq  \frac{2c_1}{1}\,{\eta}\, \Delta_{b_2}\,\ellp_{2,t}\, d \leq 8c_1\,\eta\,d\sum_{i\in[n]}\ell_{i,t} \leq 8c_1\,\eta\,dn\,. \label{eq:case2}
	\end{align}
	In the second inequality, we recalled that $\ellp_{2,t}=\Delta_{b_2}\ell_{2,t}$ and also used \eqref{eq:Delta}. The third inequality follows because the total loss is non-increasing at every $t$: $\sum_{i\in[n]}\ell_{i,t}\leq \sum_{i\in[n]}\ell_{i,0}=n$ (Lemma \ref{lem:GDprop}). Now, choosing small enough step-size $\eta\leq \log(2)/(8c_1\,d\,n)$, we find that $\log\frac{A_{t+1}}{A_{t}}\leq \log 2$, which implies $A_{t+1}\leq 2A_t\leq 4 c_1^2\Delta_{b_2}/\Delta_{b_1}$. 
	
	In both cases, the desired inequality is proven.
\end{proof}

\begin{remark}[Non-zero initialization] \label{rem:init}
	In the theorem, we have assumed zero initialization to ensure that in the induction argument of Lemma \ref{lem:key}, the base case holds. We can also extend the proof to non-zero initialization provided that we stay close to zero. Specifically, assume $\|\w^0\|_2\leq c_0/\sqrt{d}$ for some positive constant $c_0$. Then, using the tuning strategy in \eqref{eq:Delta} and for $t=0$, we have
	\begin{align*}
		\log \Big(\frac{\Delta_{b_1}}{\Delta_{b_2}}A_0\Big) &= \inp{\w^0}{\Delta_{b_2}\z_2 - \Delta_{b_1}\z_1}\\
		&\leq \|\w^0\|_2 \cdot \big(\Delta_{b_2}\|\z_2\|_2 + \Delta_{b_1}\|\z_1\|_2\big) \\
		&\leq c_0c_1(\Delta_{b_2}+\Delta_{b_1})\\
		&\leq 4c_0c_1,
	\end{align*}
	where the first inequality uses Cauchy-Schwarz and triangle inequality, the second line follows from the assumption on $\|\w^0\|_2$ and \eqref{eq:z_norm_ub}, and the last line is true since by \eqref{eq:Delta}, $\Delta_\pm \leq 2$. Thus,
	\begin{align*}
		A_0 \leq c_3 \frac{\Delta_{b_2}}{\Delta_{b_1}},
	\end{align*}
	for $c_3 = e^{4c_0c_1}$, which is a positive constant. With a similar argument as in the proof of the lemma, we can prove by induction that for all $t\geq0$, $A_t \leq c_3 \Delta_{b_2}/\Delta_{b_1}$. Having the extension of Lemma \ref{lem:key} in hand, the rest of the proof of the theorem is the same for both initializations. In other words, the results hold for non-zero initial weights provided that $\w^0$ lies in a $O(1/\sqrt{d})$ ball.
\end{remark}

\begin{lemma}[Gradient control: LA-loss]\label{lem:ratio2}
	When a good event $\Ec$ holds, with sufficiently small step-size, the following inequality is satisfied for all GD steps of the LA-loss,
	\begin{align}
	\max_{i,j\in[n]}\Big\{\frac{\ellp_{i,t}}{\ellp_{j,t}}\Big\}\leq c_3\,,
	\end{align} 
	where 
	\begin{align}
	c_3 = \max \{4c_1^2, \frac{\omega_+ e^{\iota_+}}{\omega_- e^{\iota_-}}, \frac{\omega_- e^{\iota_-}}{\omega_+ e^{\iota_+}}\}\,.	\label{eq:lossratio_cond}
	\end{align}
\end{lemma}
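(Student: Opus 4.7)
I would mirror the induction argument from the proof of Lemma~\ref{lem:key}, adapted to the case $\Delta_{\pm}=1$. Fix any pair of indices $(1,2)$ and set $A_t := \ellp_{1,t}/\ellp_{2,t}$; it suffices to show $A_t \le c_3$ for every $t$, since the argument is symmetric in the pair. The base case $t=0$ with $\w^0=0$ gives
\[
A_0 \;=\; \frac{\omega_{b_1} e^{\iota_{b_1}}}{\omega_{b_2} e^{\iota_{b_2}}} \;\leq\; \max\Bigl\{\tfrac{\omega_+ e^{\iota_+}}{\omega_- e^{\iota_-}},\,\tfrac{\omega_- e^{\iota_-}}{\omega_+ e^{\iota_+}}\Bigr\} \;\leq\; c_3,
\]
regardless of which groups samples $1$ and $2$ belong to, which is exactly why the $\max$ over the three quantities appears in the definition of $c_3$.

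For the inductive step, I would assume $A_t \le c_3$ for every pair and unfold the GD update to obtain
\[
\log\tfrac{A_{t+1}}{A_t} \;=\; -\eta\,\ellp_{1,t}\|\z_1\|_2^2 + \eta\,\ellp_{2,t}\|\z_2\|_2^2 - \eta\sum_{i\neq 1}\ellp_{i,t}\z_1^T\z_i + \eta\sum_{i\neq 2}\ellp_{i,t}\z_2^T\z_i.
\]
Since now $\Delta_{b_j}=1$, the bounds $d/c_1 \le \|\z_j\|_2^2 \le c_1 d$ from \eqref{eq:z_norm_ub}--\eqref{eq:z_norm_lb} and the cross-correlation bound $|\z_i^T\z_j|\le U:=R_+ + c_1\sqrt{d\log(n/\delta)}$ from \eqref{eq:z_align} together with the induction hypothesis $\ellp_{i,t}/\ellp_{2,t}\le c_3$ yield, after collecting terms,
\[
\log\tfrac{A_{t+1}}{A_t} \;\leq\; -\tfrac{\eta}{c_1}\,\ellp_{2,t}\,d\,\Bigl(A_t - c_1^2 - 2c_1 c_3\,\tfrac{Un}{d}\Bigr).
\]
Under Assumption \ref{ass:dim}(C-D) with a sufficiently large $C_0$ (now also absorbing the constant $c_3$), the quantity $2c_1 c_3 \,Un/d$ can be made at most $c_1^2$, so the bracket is bounded below by $A_t - 2c_1^2$.

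I would then split into two cases exactly as in the proof of Lemma~\ref{lem:key}. If $A_t \ge 2c_1^2$, the bracket is non-negative and hence $A_{t+1}\le A_t\le c_3$. Otherwise $A_t < 2c_1^2$, and from the non-increasing property of the total loss (Lemma~\ref{lem:GDprop}) one has $\ellp_{2,t}\le n$, yielding $\log(A_{t+1}/A_t) \le 2c_1\eta\,d\,n$; choosing a step-size $\eta \le \log(2)/(2c_1 d n)$ gives $A_{t+1}\le 2A_t \le 4c_1^2 \le c_3$. Combining both cases closes the induction. The main (minor) obstacle compared to Lemma~\ref{lem:key} is book-keeping: since $c_3$ here enters both the induction hypothesis used to bound the cross-terms \emph{and} the final target bound, one must verify that the overparameterization constant $C_0$ can be taken large enough (as a function of $c_3$) to guarantee $Un/d \le 1/(2c_1 c_3/c_1)$, i.e., that the cross-term absorption does not blow up $c_3$; this is exactly what the $\max$ with $4c_1^2$ in \eqref{eq:lossratio_cond} is engineered to accommodate.
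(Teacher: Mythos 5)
Your proposal is correct and follows essentially the same induction as the paper: identical base case (which is indeed why the two hyperparameter ratios appear in $c_3$), the same recursion obtained from \eqref{eq:recursion} with $\Delta_{\pm}=1$, and the same two-case analysis with a small step-size in the second case. The one place you deviate is in absorbing the cross-term: you bound $2c_1c_3\,Un/d$ by $c_1^2$, which forces the overparameterization constant $C_0$ to grow with $c_3$ and hence with the (arbitrary) hyperparameters $\omega_\pm,\iota_\pm$ --- a dependence you correctly flag but do not eliminate. The paper avoids this entirely by first using $4c_1^2\le c_3$ to replace the self-term $c_1^2$ with $c_3/4$ and then absorbing the cross-term into a second $c_3/4$, so that the required condition is $Un/d\le 1/(8c_1)$, independent of $c_3$; this matters because Theorem~\ref{thm:LALoss} asserts the conclusion for \emph{any} choice of $\iota_\pm,\omega_\pm$ under Assumption~\ref{ass:dim} with $C_0=C_0(c_1)$. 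If you adopt that one substitution, your argument matches the paper's and the dependence on $c_3$ disappears; as written, your version proves the lemma only under a hyperparameter-dependent strengthening of the overparameterization assumption.
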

\begin{proof}
	Similar to the previous lemma, we can prove this statement by induction. Again, without loss of generality, we fix the pair of samples. The base case clearly holds,
	\begin{align*}
		A_0 = \big({\omega_{b_1}e^{\iota_{b_1}}}\big)/\big({\omega_{b_2}e^{\iota_{b_2}}}\big) \leq c_3.
	\end{align*}
	Now suppose the statement is true for iteration $t$. Note that LA-loss is equivalent to VS-loss when $\Delta_\pm=1$. To prove the statement for $t+1$, we can start from \eqref{eq:recursion} by setting $\Delta_{b_1}=\Delta_{b_2}=1$.
	
	\begin{align}
		\log \frac{A_{t+1}}{A_t} &\leq -\eta\ellp_{2,t} \|\z_1\|_2^2\,\Big(\frac{\ellp_{1,t}}{\ellp_{2,t}} - \frac{\|\z_2\|_2^2}{\|\z_1\|_2^2}\Big) \nn 
		\\& + \eta\ellp_{2,t} \big(\sum_{1\neq i\in[n]}\frac{\ellp_{i,t}}{\ellp_{2,t}}|\inp{\z_i}{\z_1}| + \sum_{2\neq i\in[n]}\frac{\ellp_{i,t}}{\ellp_{2,t}}|\inp{\z_i}{\z_2}|\big),\nn \\
		& \leq -\frac{\eta}{c_1}\ellp_{2,t}\, d\,\Big(
		\frac{\ellp_{1,t}}{\ellp_{2,t}} - c_1^2-2c_1c_3\,\frac{U\,n}{d}
		\Big)\nn \\
		& \leq -\frac{\eta}{c_1}\ellp_{2,t}\, d\,\Big(
		\frac{\ellp_{1,t}}{\ellp_{2,t}} - \frac{c_3}{4}-2c_1c_3\,\frac{U\,n}{d}
		\Big)\nn \\
		&\leq -\frac{\eta}{c_1}\ellp_{2,t}\, d\,\Big(
		\frac{\ellp_{1,t}}{\ellp_{2,t}} - \frac{c_3}{2}\Big)\,. \label{eq:ratio2}
	\end{align}
	Here, we used \eqref{eq:maxcorr} to derive the second inequality. For the last inequality, we choose a large enough $C_0$ and apply Assumption \ref{ass:dim}(C-D). Then, we have $Un/d \leq 1/(8c_1)$ for all $C>C_0$. To finalize the proof, we consider two cases. If $A_t \geq c_3/2$, the RHS of \eqref{eq:ratio2} is negative and $A_{t+1} \leq A_t \leq c_3$. Otherwise, similar to \eqref{eq:case2}, for small enough step size we have,
	\begin{align*}
		\log \frac{A_{t+1}}{A_t} \leq \eta\frac{c_3}{2c_1} d \sum_{i\in[n]} \ell_{i,t} \leq \eta\frac{c_3}{2c_1} d n \leq \log(2)\,.
	\end{align*}
	Thus, $A_{t+1} \leq 2A_t \leq c_3$, which completes the proof. 
\end{proof}

\begin{lemma}\label{lem:comparison}
	Assume \eqref{eq:ellp_ratio} is true for all $t$. Then, there exists constant $c_0>0$ such that
	\begin{align*}
	c_0 \,\Lpts \leq 
	\frac{1}{2}\Lpt +\Big(\frac{R_-}{R_+}-\frac{c_1}{C}\Big)\Lptneg.
	\end{align*}
\end{lemma}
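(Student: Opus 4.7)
The plan is to first establish a cross-group comparison $\Lptneg \leq 16 c_1^2 \Lpt$ and then rearrange the target inequality into a scalar condition on $c_0$. The single point where all the structure of the problem (in particular, the tuning \eqref{eq:Delta}) enters is the first step; after that, the lemma reduces to elementary algebra.

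First I would apply the hypothesis \eqref{eq:ellp_ratio} to an arbitrary pair $i \in \Tc_b$, $j \in \Tc_{-b}$ with the roles of $i$ and $j$ swapped, obtaining $\ellp_{j,t} \leq 4 c_1^2 (\Delta_b/\Delta_{-b})\,\ellp_{i,t}$. Combining this with the two-sided bound $\Delta_b/\Delta_{-b} \leq 4\, n_b/n_{-b}$ furnished by \eqref{eq:Delta}, then summing over $j \in \Tc_{-b}$ and minimizing over $i \in \Tc_b$ (using $n_b \min_{i \in \Tc_b} \ellp_{i,t} \leq \Lpt$), I would obtain
\begin{equation*}
\Lptneg \;\leq\; 16 c_1^2 \, n_b\, \min_{i \in \Tc_b} \ellp_{i,t} \;\leq\; 16 c_1^2\, \Lpt.
\end{equation*}
The critical feature here is the \emph{exact cancellation} of the imbalance factor $n_b/n_{-b}$: the proper tuning of $\Delta_\pm$ is precisely what removes the dependence on the imbalance ratio $\tau$ from the cross-group comparison.

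Next, I would rearrange the target inequality. Writing $\alpha := R_-/R_+ - c_1/C$, which lies in $[-c_1/C,\, 1 - c_1/C]$ under the assumption $R_- \geq 0$ from Theorem~\ref{thm:main}, the desired bound $c_0(\Lpt + \Lptneg) \leq \frac{1}{2}\Lpt + \alpha\, \Lptneg$ is equivalent to $(1/2 - c_0)\,\Lpt \geq (c_0 - \alpha)\, \Lptneg$. If $c_0 \leq \alpha$ the right-hand side is non-positive and the inequality is immediate for any $c_0 \leq 1/2$; otherwise, substituting the bound from the first step reduces matters to the scalar condition $1/2 - c_0 \geq 16 c_1^2(c_0 - \alpha)$. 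The worst case $\alpha = -c_1/C$ yields the explicit requirement $c_0 \leq (1/2 - 16 c_1^3/C)/(1 + 16 c_1^2)$.

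To close, I would choose $C$ large enough (any $C \geq 64\, c_1^3$ suffices to keep the numerator above $1/4$, and this is absorbable into the constant $C_0$ of Assumption~\ref{ass:dim}) and then set $c_0 := 1/\bigl(8(1 + 16 c_1^2)\bigr)$, which depends only on $c_1$ and thus qualifies as an absolute constant. The main obstacle is the first step, but its essence has already been encoded in Lemma~\ref{lem:key} together with the tuning \eqref{eq:Delta}; beyond that, the present lemma is a straightforward algebraic rearrangement that packages the gradient-ratio control into the form directly usable with \eqref{eq:bound1} in the proof of Theorem~\ref{thm:main}.
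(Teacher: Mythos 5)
Your proposal is correct and follows essentially the same route as the paper: both arguments combine the gradient-ratio bound \eqref{eq:ellp_ratio} with the tuning \eqref{eq:Delta} to cancel the imbalance factor $n_b/n_{-b}$ (yielding, in your phrasing, $\Lptneg \leq 16c_1^2\,\Lpt$), and then reduce the claim to a scalar inequality in $c_0$ that is solvable because $R_-/R_+\geq 0$ and $C$ can be taken large relative to $c_1$. Your explicit choice of $c_0$ and the threshold $C\geq 64c_1^3$ even fixes a small typo in the paper's displayed numerator ($8c_1^2$ should read $1/2$), but the substance is the same.
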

\begin{proof}
	For simplicity denote $E:=\frac{R_-}{R_+}-\frac{c_1}{C}$.
	It suffices to show that
	$
	\big(1/2-c_0 \big)\Lpt \geq \big(c_0  - E\big)\Lptneg, 
	$
	or simpler, assuming $1/2-c_0>0$,
	\begin{align*}
	&\frac{c_0-E}{1/2-c_0} \leq \frac{n_{b}}{n_{-b}}\,\frac{\min_{i\in\Tc_{b}}\ellp_{t,i}}{\max_{i\in\Tc_{-b}}\ellp_{t,i}} \,.
	\end{align*}
	Recall now from \eqref{eq:ellp_ratio} and \eqref{eq:Delta} that
	\begin{align*}
	&\frac{\max_{i\in\Tc_{-b}}\ellp_{t,i}}{\min_{i\in\Tc_b}\ellp_{t,i}}\leq 4c_1^2\frac{\Delta_{b}}{\Delta_{-b}}\leq 16c_1^2 \frac{n_b}{n_{-b}}.
	\end{align*}
	Hence, it suffices that
	\begin{align*}
	&\frac{c_0-E }{1/2-c_0} \leq \frac{1}{16c_1^2} \Leftarrow c_0 \leq \frac{8c_1^2-16c_1^3/C+16c_1^2(R_-/R_+)}{1+16c_1^2}.
	\end{align*}
	Provided that $R_-/R_+\geq0$ and since $C$ can be set large enough (say relative to $c_1$), we can always choose positive $c_0<1/2$ satisfying the above.
\end{proof}


\subsection{Proof of Thereom \ref{thm:main}}
Now we are ready to prove Theorem \ref{thm:main}. From Lemma \ref{lem:wsterror}, the error of group $b\in\{\pm1 \}$ is,
\begin{align*}
\Rc_{\text{wst}, \rm{VS}}^\infty=  Q\big(\inp{\nicefrac{\w^\infty}{\|\w^\infty\|_2}}{{\nub_{b}}}\big)
&=\lim_{t\to\infty}Q\big(\inp{\nicefrac{\w^t}{\|\w^t\|_2}}{{\nub_{b}}}\big) .
\end{align*}
To bound the worst-group error, we need to bound the error for both groups above. Equivalently, it suffices to find a lower bound on $\rho_{t,b} = \inp{\w^t}{\nub_{b}}$ and an upper bound on  $\|\w^t\|_2$. Based on Lemmas \ref{lem:normbound} and \ref{lem:corrbound}, for $t_1=0$, we have,
\begin{align*}
\inp{\nicefrac{\w^t}{\|\w^t\|_2}}{{\nub_{b}}} &= \frac{\rho_{t,b}}{\|\w^t\|_2}\\
& \geq \frac{\rho_{0,b}+ {\eta\, R_+}\, \left( \frac{1}{2} \Lpall + 
	\Big(\frac{R_-}{R_+} -  \frac{c_1}{C}\Big)\Lpallneg\right)}{\|{\w^{0}}\|_2 + \eta \, c_1\,\sqrt{d}\,\Lpalls}\\
&\geq \frac{\rho_{0,b}+ {\eta\,c_0\, R_+\,\Lpalls}\,}{\|{\w^{0}}\|_2 + \eta \, c_1\,\sqrt{d}\,\Lpalls},
\end{align*} 
where in the last line we chose $c_0$ according to Lemma \ref{lem:comparison}. Based on Lemma \ref{lem:GDprop}, we know that $\|\w^t\|_2\to \infty$. Since $\|\w^t\|_2$ is bounded by \eqref{eq:normUB}, $L^\prime_{0:\infty} \to \infty$. Thus we have,
\begin{align*}
\lim_{t\to\infty}\inp{\nicefrac{\w^t}{\|\w^t\|_2}}{{\nub_{b}}} &\geq \lim_{t\to\infty} \frac{\rho_{0,b}+ {\eta\,c_0\, R_+\,\Lpalls}\,}{\|{\w^{0}}\|_2 + \eta \, c_1\,\sqrt{d}\,\Lpalls} \\
&= c\frac{R_+}{\sqrt{d}},
\end{align*}
for $c=c_0/c_1$. Finally, since the Q-function is decreasing,
\begin{align*}
\Rc_{wst, \rm{VS}}^\infty &\leq Q\big( c {R_+}\big/{\sqrt{d}}\big)\, .
\end{align*}


\subsection{Proof of Thereom \ref{thm:LALoss}}


Similar to the previous section, we start with the definition of the worst group error,
\begin{align*}
	\Rc_{\text{wst}, \rm{LA}}^\infty &= Q\big(\min_{b\in\{\pm 1\}}\inp{\nicefrac{\w^\infty}{\|\w^\infty\|_2}}{{\nub_{b}}}\big)\\
	&= \max_{b \in \{\pm 1\}} Q\big(\inp{\nicefrac{\w^\infty}{\|\w^\infty\|_2}}{{\nub_{b}}}\big).
\end{align*}
We will provide a lower bound on the value of the objective function for the minority group (i.e. $b=-1$), which will also bound $\Rc_{\text{wst}, \rm{LA}}^\infty$.
From Lemmas \ref{lem:normbound} and \ref{lem:corrbound}, for $t>t_0$,
\begin{align}
	&\inp{\nicefrac{\w^t}{\|\w^t\|_2}}{{\nub_{b}}} = \frac{\rho_{t,b}}{\|\w^t\|_2}\nn\\
	& \leq \frac{\rho_{t_0,b} +\eta R_+\left(\frac{3}{2}L^\prime_{t_0:t,b} + \Big(\frac{R_-}{R_+}+c_1\sqrt{\frac{\log(n/\delta)}{R_+}}\Big)L^\prime_{t_0:t,-b}\right)}{\|\w^{t_0}\|_2 + \frac{\eta c_2}{4} \, \sqrt{\frac{d}{n}} L^\prime_{t_0:t}}\nn\\
	& \leq \frac{\rho_{t_0,b} +\eta c_4 \, R_+\left(\frac{n_b}{n_{-b}} + \frac{R_-}{R_+}+c_1\sqrt{\frac{\log(n/\delta)}{R_+}} \right)L^\prime_{t_0:t}}{\|\w^{t_0}\|_2 + \frac{\eta c_2}{4} \, \sqrt{\frac{d}{n}} L^\prime_{t_0:t}}\,,  \label{eq:finalbound}
	\end{align}
with $t_0$ defined in Lemma \ref{lem:normbound}. In the last inequality, we used Lemma \ref{lem:ratio2}, specifically $\ellp_{i,t}\leq\frac{c_3}{n}\sum_{j\in[n]} \ellp_{j,t}$, to bound the nominator. 

From Lemma \ref{lem:normbound}, we have $\norm{\w^t}_2 \leq \norm{\w^{t_0}}_2 + \eta L^\prime_{t_0:t} $. Since $\norm{\w^t}_2$ is unbounded, $L^\prime_{t_0:t} \rightarrow \infty$ as $t$ gets large. Hence, in \eqref{eq:finalbound}, the terms containing $L^\prime_{t_0:t}$ will dominate and for large $t$,
\begin{align*}
\lim_{t\to\infty} \frac{\inp{\w^t}{\nub_{b}}}{\|\w^t\|_2} &\leq cR_+\sqrt{\frac{n}{d}}\Big( \frac{n_b}{n_{-b}} + \frac{R_-}{R_+} + c_1\sqrt{\frac{\log(n/\delta)}{R_+}} \Big).
\end{align*}
For $b=-1$, $\frac{n_b}{n_{-b}}=\frac{1}{\tau}$. Eventually, the worst-group error can be bounded as follows, which completes the proof.
\begin{align*}
\Rc_{wst, \rm{LA}}^\infty &\geq Q\big(\inp{\nicefrac{\w^\infty}{\|\w^\infty\|_2}}{{\nub_{-1}}}\big)\\
&\geq Q\Bigg(c\,R_+\,\sqrt{\frac{n}{d}}\,\Big(\frac{1}{\tau} + \frac{R_-}{R_+}+ 
{c_1\sqrt{\frac{\log(n/\delta)}{R_+}}} \Big)\Bigg).
\end{align*}

\end{document}